\def\ps@pprintTitle{%
 \let\@oddhead\@empty
 \let\@evenhead\@empty
 \def\@oddfoot{}%
 \let\@evenfoot\@oddfoot}
\newsavebox\extrainfobox
\newcommand{\by}{{\bf y}}
\newcommand{\bb}{{\bf b}}
\newcommand{\bv}{{\bf v}}
\newcommand{\bu}{{\bf u}}
\newcommand{\bt}{{\bf t}}
\newcommand{\bz}{{\bf z}}
\newcommand{\bx}{{\bf x}}
\newcommand{\bX}{{\bf X}}
\newcommand{\bY}{{\bf Y}}
\newcommand{\bB}{{\bf B}}
\newcommand{\bE}{{\bf E}}
\newcommand{\bid}{{\bf 1}}
\newcommand{\beps}{{\boldsymbol \epsilon}}
\newcommand{\bsig}{{\boldsymbol \Sigma}}
\newcommand{\bth}{{\boldsymbol \Theta}}
\newcommand{\bthe}{{\boldsymbol \theta}}
\newcommand{\bet}{{\boldsymbol \beta}}
\theoremstyle{definition}
\newtheorem{mydef}{Definition}
\theoremstyle{plain}
\newtheorem{mytheo}{Theorem}
\theoremstyle{plain}
\newtheorem{mylemma}{Lemma}
\theoremstyle{plain}
\newtheorem{corol}{Corollary}
\DeclareMathOperator*{\argmax}{arg\,max}
\begin{document}
\begin{frontmatter}
\title{  {\LARGE Semi-parametric Order-based Generalized Multivariate Regression}}
\author{{ Milad Kharratzadeh, Mark Coates} }
\address{McGill University}
\begin{abstract}
In this paper, we  consider a generalized multivariate regression problem where the responses are monotonic functions of linear transformations of predictors. We propose a semi-parametric algorithm based on the ordering of the responses which is invariant to the functional form of the transformation function. We prove that our algorithm, which maximizes the rank correlation of responses and linear transformations of predictors, is a consistent estimator of the true coefficient matrix. We also identify the rate of convergence and show that the squared estimation error decays with a rate of $o(1/\sqrt{n})$. We then propose a greedy algorithm to maximize the highly non-smooth objective function of our model and examine its performance through extensive simulations. Finally, we compare our algorithm with traditional multivariate regression algorithms over synthetic and real data. 
\end{abstract}
\begin{keyword}
semi-parametric regression \sep generalized multivariate regression \sep rank correlation 
\end{keyword}
\end{frontmatter}

\section{Problem Setup}
In linear multivariate regression, we have the following model:
\begin{equation}
\by_i^T = \bx_i^T \bB + \beps, \qquad i=1, \ldots, n,
\end{equation}
where $\by_i \in \mathbb{R}^{q\times 1}$ is the response vector ($q>1$), $\bx_i \in \mathbb{R}^{p\times 1}$ is the predictor vector, $\bB\in\mathbb{R}^{p\times q}$ is the coefficient matrix, and $\beps_i\in\mathbb{R}^{q\times 1}$ represents the noise with i.i.d. elements that are independent of $\bx_i$. In this paper, we consider the following extension of this problem:
\begin{equation} \label{eq:model}
\by_i^T = U_i(\bx_i^T\bB + \beps_i^T), \qquad i=1, \ldots, n, 
\end{equation}
where $U_i\!\!:\mathbb{R}\to\mathbb{R}$ is a non-degenerate monotonic function called the {\em utility} or {\em link} function. When the input of $U_i$ is a vector or a matrix, it is implied  that $U_i$ is applied separately on each individual element to give the output, which  is a vector or matrix of the same size as the input. Without loss of generality, we assume that $U_i$ is an increasing function. We propose a semi-parametric, rank-based approach to estimate $\bB$ which is invariant with respect to the functional form of  $U_i$ functions. Our approach only uses the ordering of the elements of $\by_i$, which makes it more robust to outliers and heavy-tailed noise compared to traditional regression algorithms. This also makes our approach applicable to cases where the numeric values of $\by_i$ are not available, and only their ordering is known.

We show that it is possible to consistently estimate $\bB$ solely based on the ordering of the elements of $\by_i$. Our approach to estimating $\bB$ is based on maximizing Kendall's rank correlation of $\by_i^T$ and $\bx_i^T\bB$. For notational simplicity, we assume that all the link functions are equal and denote them by $U$; however, all the results presented in this paper hold for the case where there is a separate link function, $U_i$, for each observation. Let us rewrite (\ref{eq:model}) in matrix form:
\begin{equation}\label{eq:matmodel}
\bY_{n\times q} = U(\bX_{n\times p}\bB_{p\times q}+\bE_{n\times q}), 
\end{equation}
where $p$ is the number of predictors, $q$ is the number of responses, and $n$ denotes the number of instances. $\bx_i^T$, $\by_i^T$, and $\beps_i^T$ correspond, respectively, to the $i$-th rows of $\bX$, $\bY$, and $\bE$. To find $\bB$, we propose to solve:
\begin{equation}\label{eq:opt}
\widehat{\bB}_n\!\!=\!\argmax_{\bB} \underbrace{\frac{1}{n\binom{q}{2}} \sum_{i=1}^n \sum_{j=1}^q \sum_{k=1}^q \bid(y_{ij}\!\!>\!\!y_{ik})\bid(\bx_i^T\bb_j\!\!>\!\!\bx_i^T\bb_k) }_{S_n(\bB)},
\end{equation}
where $\bb_j$ denotes the $j$-th column of $\bB$. The intuition behind this formulation is that since $U$ is increasing and the error is i.i.d. and independent of $\bx$, when we have $\bx_i^T\bb_j > \bx_i^T\bb_k$, it is more likely to have $y_{ij} > y_{ik}$ than the other way around. The term in the summation is zero for $j=k$. 
Maximizing $S_n(\bB)$ is equivalent to maximizing the average rank correlation of $\by_i^T$ and $\bx_i^T\bB$ since $2S_n(\bB)-1$ corresponds to the average over the $n$ observations of the Kendall rank correlation between $\by_i^T$ and $\bx_i^T\bB$. 

\section{Motivating Examples and Related Work}
\subsection{Learning from non-linear measurements} In many practical settings, the measurements or observations are noisy non-linear functions of a linear transformation of an underlying signal. This could be due to the uncertainties and non-linearities of the measurement device or arise from the experimental design (e.g., censoring or quantization). In the statistics and economics literature, this model is known as the single-index model and it has been studied extensively~\citep{Li89, Ichi93, Xia06, Del06, Yi15, Rad15}. The response in the single-index model is univariate and the form of the link function is sometimes assumed known unknown.

In our model, the response is a vector (which leads to a multivariate regression inference problem) and we assume that the functional form of the link function is unknown. Also, as explained in more detail below, our inference approach only uses the ordering of the elements of the response vector. Recently, it has been shown that under certain assumptions (e.g., when the predictors are drawn from a Gaussian distribution), Lasso with non-linear measurements is equivalent to one with linear measurement with an equivalent input noise proportional to the non-linearity of the link function~\citep{Thr15}. Thus, it has been suggested to use Lasso in the non-linear case as if the measurements were linear. Here, and under much more general conditions, we show that our algorithm performs better than a simple application of Lasso to the non-linear problem.

\subsection{Learning from the ordering of responses}
Our approach is particularly of interest in applications (e.g., surveys) where subjects order a set of items based on their preferences. Some examples include physicians ranking different life-sustaining therapies they are likely to remove from critically ill patients after they have already made the decision to gradually withdraw support~\citep{Chr93,Fes12}, or people ranking different types of sushi based on their preference~\citep{Kam11}. In these scenarios, the underlying model cannot be learned by traditional regression techniques, which require a numerical response. However, our algorithm is directly applicable since it only uses the ordering of the elements of the response vector.

Even in the scenarios where the actual values of responses are available (e.g., numerical ratings), it is often more sensible to focus on the ordering rather than striving to learn based on the assigned numerical values. As discussed in~\citep{Kam10}, there is often no invariant and objective mapping between true preference and observed ratings among users or survey participants, since ``each user uses his/her own mapping based on a subjective and variable criterion in his/her own mind''. Thus, the mappings might be inconsistent among different users. Moreover, the mappings might be inconsistent for a given user across different items; as noted in~\citep{Lua04}, only trained experts, e.g., wine tasters, are capable of providing consistent mappings for different items. By using the ordering in training and prediction, we minimize the effects of these inconsistencies.

\subsection{Collaborative and content-based filtering}
Our work is also related to the problem of personalized recommendation systems, but with important differences. Recommendation systems can be divided into three main categories: content--based filtering, collaborative filtering, and hybrid models; see~\citep{Lop11, Lee12, Bob13} for recent surveys. Content--based filtering employs the domain knowledge of users (e.g, demographic information and user profile data) and items (e.g., genre or actors of a movie) to predict the ratings. Collaborative filtering does not use any user or item information except a partially observed rating matrix, with rows and vectors corresponding to users and items and matrix elements corresponding to ratings. In general, the rating matrix is extremely sparse, since each user, normally, does not experience and rate all items. Hybrid systems combine collaborative and content--based filtering, e.g., by making separate predictions with each filtering approach and averaging the results.

If the regression--based framework described in this paper were used in a recommendation system, it would predict each user's ordering of a set of items based on a set of features for that user. These features could include demographic information, user profile data, or ratings of a fixed set of items. Contrary to content--based filtering, our approach does not need domain-specific knowledge about the features of items (e.g. relevant features are different for books and movies). This is potentially useful in applications where the items to be ranked are diverse in nature - for example, products on the online Amazon store. Also, as opposed to collaborative filtering, we can incorporate user profile data and provide predictions for new users even if they have provided no prior ratings (i.e., providing predictions only based on features such as demographic data). In Section~\ref{sec:real}, we provide a preference prediction task where neither collaborative nor content--based filtering is applicable since we want to make predictions for new users without using domain-specific knowledge about the items. Thus, our algorithm is different from the problems of collaborative and content-based filtering. It is important to stress that we introduce and study a general semi-parametric multivariate regression method which can be used in recommendation systems, but this is just one of multiple potential applications. 

\subsection{Maximum rank correlation estimation} 
In~\citep{Han87}, Han considered a problem similar to~(\ref{eq:model}) but with an important difference. His formulation, called Maximum Rank Correlation (MRC) estimation, was stated for the multiple regression setting, where $y_i$ is real-valued rather than a $q$-vector, and his goal was to maximize the rank correlation across instances. Therefore, the goal in MRC estimation is to capture the ordering of $y_i, i=1,\ldots,n$ (across instances), whereas in this paper,  our goal is to capture the ordering of $y_{ij}, j=1,\ldots,q$ for a fixed $i$ (for a specific instance, across responses). 
This difference significantly changes the scope of the problem and its theoretical properties. Considering the ordering across responses enables us to model  applications where an instance's ordering (or rating) of a set of items depends exclusively on its predictors.  Also, as we see in the next section, the identifiability and consistency conditions for problem~(\ref{eq:model}) differ significantly from those of the multiple regression problem. 

There are extensions of the MRC approach (e.g., \citep{Cav98, Abr03}), but they all are in the multiple regression domain and only differ in how they define the objective function to solve the same problem. Our work differs from them for the same reasons mentioned above. Unlike exploded and ordered logit models~\citep{All94,Ala13}, our approach is semi-parametric and invariant to the functional form of $U$. Our work is also markedly different from the `learning to rank' problem in the information retrieval literature, where the goal is to find relevant documents for a given query~\citep{Cao07,Liu11}.


\section{Summary of Results}
In Section~\ref{sec:theory}, we outline the identifiability conditions on the true model. We show that the true matrix, $\bB^*$, is identifiable up to a scaling and addition of a vector to all its columns. Moreover, we show that the maximizer of the optimization problem in~(\ref{eq:opt}) is a consistent estimator of the true matrix. In Section~\ref{sec:rate}, we study the convergence rate and show that our estimator's squared error decays faster than $o(1/\sqrt{n})$ if the Hessian of $E[S_n(\bB)]$ is negative definite at $\bB^*$. In Section~\ref{sec:opt}, we provide a greedy algorithm to estimate the maximizer of~(\ref{eq:opt}) in polynomial time and show how our algorithm can be extended to provide sparsity among the elements of $\bB$. Finally, in Sections~\ref{sec:sim1} and~\ref{sec:real}, we provide extensive experimental results using both synthetic and real datasets and show that our algorithm is successful in estimating the underlying true models and provides better prediction results compared to other applicable algorithms.

\section{Strong Consistency}\label{sec:theory}

In this section, we show that the solution of~(\ref{eq:opt}) is strongly consistent under certain conditions. 
$S_n(\bB)$ is invariant to the multiplication of all elements of $\bB$ by a positive constant; i.e., for $c>0$, $S_n(\bB) = S_n(c\bB)$. The objective function also does not change if the same vector is added to all of the columns of $\bB$; i.e., for any $\bet \in \mathbb{R}^{p\times1}$, $S_n(\bB) = S_n(\bB + \bet\mathds{1}_{1\times q})$, where $\mathds{1}$ is a vector of all ones. These invariances are expected, since to have a semi-parametric estimate, we target maximizing the rank correlation and ranks are not affected when all the elements are  multiplied by a positive constant ($c$), or are increased/decreased by the same amount ($\bx^T\bet$). In other words, since our  estimation is semi-parametric in $U$ (and thus, must be invariant to strictly monotonic transformations of observations),  $\bB$ and $c\bB + \bet\mathds{1}_{1\times q}$ are equivalent.  So, we assume that $\|\bB\|_F = 1$ (normalization), and that the last column of $\bB$ is all zeros (subtracting the last column from all columns). We perform the optimization in~(\ref{eq:opt}) over the set $$\mathcal{B} \triangleq \{\bB_{p \times q}: \|\bB\|_F = 1 \text{ and } \bB_{i,q} = 0 \text{ for } i=1,2,\ldots, p\},$$ where $\|\cdot\|_F$ denotes the Frobenius norm. 

 Let us denote the true coefficient matrix by $\bB^*$ and without loss of generality assume $\bB^*\in\mathcal{B}$; otherwise, we can find $c>0$ and $\bet\in\mathbb{R}^{p\times 1}$ such that $c\bB^* + \bet\mathds{1}_{1\times q} \in\mathcal{B}$ which gives the same value for objective function as $\bB^*$. Also, to have a non-degenerate problem, we assume that $\bB^*$ does not have rank 1, since in that case there exist two vectors $\bu\in \mathbb{R}^{p\times 1},\bv \in \mathbb{R}^{q\times 1}$ such that $\bB^* = \bu\bv^T$, and $\by^T = U(\bx^T\bu\bv^T+\beps)$. Therefore, the ordering of the elements of $\by$ will be either the same as the ordering of the elements of $\bv$ (if $\bx^T\bu>0$) or the reverse of it (if $\bx^T\bu<0$) with perturbations due to the noise. So, different observed orderings of the elements of $\by$ are caused merely by noise which is not of interest.
 Finally, we assume that no two columns of $\bB^*$ are equal, because in that case the expected values of the corresponding elements of $\by$ will be the same.
 
 Given the model in~(\ref{eq:matmodel}), to prove strong consistency, we need the following three conditions:
\begin{itemize}
\item [{\bf (C1)}] $U$ is a non-degenerate increasing function and changes value at least at one non-zero point (i.e., $U$ is not a step function changing value only at 0).
\item [{\bf (C2)}] The elements of $\bE$ are i.i.d. random variables. 
\item [{\bf (C3)}] The rows of $\bX$ are i.i.d. random vectors of size $p$, independent of the elements of $\bE$, and have a distribution function $F_X$ such that:
\begin{itemize}
\item[{\bf (C3.1)}] the support of $F_X$ is not contained in any proper linear subspace of $\mathbb{R}^p$, and
\item[\bf{(C3.2)}] for all $j\in\{1,2,\ldots,q\}$ the conditional distribution of $x_j$ given the other components has everywhere positive Lebesgue density. 
\end{itemize}
\item[{\bf (C4)}] $\bB^*$ is an interior point of $\mathcal{B}$. Moreover, $\bB^*$ has a rank higher than one and no two columns of it are equal.
\end{itemize}

As we show later, the second part of condition {\bf (C1)} is needed in the proof of identifiability. However, for all practical purposes, the step function at 0 can be replaced by an approximate function changing value over $[-\epsilon, \epsilon]$ for some $\epsilon>0$, for which the theoretical results we present in the following hold. Conditions {\bf (C3.1)} and {\bf (C3.2)} are also required for identifiability, and hold in many settings; e.g., when the rows of $\bX$ have a multivariate Gaussian distribution. 
In {\bf (C4)}, $\bB^* \in \mathcal{B}$ implies that its last column is all zeros, and because no two columns are equal, we can conclude that every column except the last one has at least one non-zero element. For some known constant $\eta>0$ which is less than all the absolute values of these non-zero elements, define 
\begin{align}\mathcal{B_{\eta}} \triangleq \{&\bB:  \bB\in\mathcal{B} ; \text{ and }   \forall j\in\{1,\ldots,p\} \quad  \exists  i \in\{1,\ldots,p\} \text{ s.t. } |\bB_{i,j}|\geq \eta\}.\nonumber
\end{align}
 Denoting the solution of~(\ref{eq:opt}) over the set $\mathcal{B_{\eta}}$ by $\widehat{\bB}_n$, we prove:
\begin{equation}
\lim_{n\to \infty} \widehat{\bB}_n \to \bB^*.
\end{equation}  

We conduct the proof in three steps. In Lemma 1, we prove the identifiability; in Lemma 2, we prove the convergence of $S_n(\bB)$ to the expected value of the rank correlation; and finally, we prove the consistency in Theorem~1.
\begin{mylemma}[Identifiability]
Given {\bf(C1)}---{\bf (C4)}, $\bB^*$ attains the unique maximum of $E\left[S_n(\bB))\right]$ over the set $\mathcal{B}$.
\end{mylemma}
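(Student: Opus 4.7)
The plan is to compute the expected objective and reduce its maximization over $\mathcal{B}$ to a pointwise-in-$\bx$ problem for each pair of responses. Using i.i.d.\ observations and conditioning on $\bx$,
\begin{equation*}
E[S_n(\bB)] = \frac{1}{\binom{q}{2}} \sum_{j \ne k} E\bigl[\mathds{1}(\bx^T\bb_j > \bx^T\bb_k)\, P(y_j > y_k \mid \bx)\bigr],
\end{equation*}
and after grouping each unordered pair $\{j,k\}$ the integrand in $\bx$ equals either $P(y_j > y_k \mid \bx)$ or $P(y_k > y_j \mid \bx)$ according to the sign of $\bx^T(\bb_j - \bb_k)$. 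Consequently, for each pair the integrand is maximized pointwise in $\bx$ by choosing the indicator so that $\bx^T\bb_j > \bx^T\bb_k$ exactly when $P(y_j > y_k \mid \bx) > P(y_k > y_j \mid \bx)$. The proof then splits into: (i) $\bB^*$ realizes this pointwise maximum for every pair, and (ii) every $\bB \in \mathcal{B}$ attaining the same expected value must coincide with $\bB^*$.

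For (i), conditional on $\bx$ one has $\{y_j > y_k\} = \{U(\bx^T\bb_j^* + \epsilon_j) > U(\bx^T\bb_k^* + \epsilon_k)\}$, and by (C2) the difference $\epsilon_k - \epsilon_j$ is symmetric about zero. A standard monotonicity-plus-symmetry argument then shows that $P(y_j > y_k \mid \bx) - P(y_k > y_j \mid \bx)$ carries the same sign as $\bx^T(\bb_j^* - \bb_k^*)$; the non-degeneracy and ``changes value at a non-zero point'' clauses of (C1) are precisely what prevent this sign from being identically zero when $U$ is only weakly monotone. Hence $\bB=\bB^*$ attains the pointwise maximum in every pair simultaneously, and therefore maximizes $E[S_n(\bB)]$.

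For (ii), if $\bB \in \mathcal{B}$ attains the same expected value, then for each pair $\{j,k\}$ one must have $\mathds{1}(\bx^T(\bb_j - \bb_k)>0) = \mathds{1}(\bx^T(\bb_j^* - \bb_k^*)>0)$ for $F_X$-almost every $\bx$. Conditions (C3.1) and (C3.2) guarantee a $p$-dimensional support with everywhere-positive conditional densities, so two open half-spaces can agree $F_X$-a.e.\ only if their normal vectors are positively proportional; that is, $\bb_j - \bb_k = c_{jk}(\bb_j^* - \bb_k^*)$ with $c_{jk}>0$. Taking $k=q$ and using $\bb_q = \bb_q^* = 0$ gives $\bb_j = c_j \bb_j^*$ for each $j<q$ with $c_j>0$. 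The rank-$\ge 2$ clause of (C4) provides two linearly independent columns $\bb_{j_0}^*, \bb_{k_0}^*$, and the pair identity on $(j_0,k_0)$ forces $c_{j_0}=c_{k_0}$; for any other column $l$, at least one of $\bb_{j_0}^*, \bb_{k_0}^*$ must be linearly independent of $\bb_l^*$ (otherwise $\bb_l^* = 0 = \bb_q^*$, violating the no-equal-columns clause of (C4)), and the pair identity on the corresponding pair propagates $c_l$ to a common constant $c$. Hence $\bB = c\bB^*$, and Frobenius-normalization in $\mathcal{B}$ gives $c=1$, so $\bB = \bB^*$.

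The principal technical obstacle is the half-space step in (ii): translating almost-everywhere equality of two indicator functions into positive proportionality of their normal vectors is where (C3.1) and (C3.2) jointly do the work, and it is what actually imposes the ``identifiable only up to positive scaling and a common column shift'' pattern already encoded in the definition of $\mathcal{B}$. A subsidiary delicacy is the sign statement in (i) when $U$ is only weakly monotone; it is precisely to rule out the pathological case of $U$ jumping only at the origin (which would destroy that sign identity) that (C1) includes its second clause.
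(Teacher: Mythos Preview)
Your part (i) and the final propagation of the proportionality constants match the paper's argument closely. The gap is in your step (ii), at the sentence ``for each pair $\{j,k\}$ one must have $\mathds{1}(\bx^T(\bb_j - \bb_k)>0) = \mathds{1}(\bx^T(\bb_j^* - \bb_k^*)>0)$ for $F_X$-almost every $\bx$.'' This inference requires $P(y_j>y_k\mid\bx)\neq P(y_k>y_j\mid\bx)$ for $F_X$-a.e.\ $\bx$, because on the set where the two conditional probabilities coincide, the indicator $\mathds{1}(\bx^T(\bb_j-\bb_k)>0)$ is unconstrained by the value of $E[S_n(\bB)]$. When $U$ is strictly increasing this set is null and your half-space argument goes through; but under (C1) $U$ may be only weakly monotone (e.g.\ a step function), and then the probability difference can vanish on a set of positive $F_X$-measure. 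You flag in (i) that (C1)'s second clause keeps the difference from being \emph{identically} zero, but ``not identically zero'' is far weaker than ``nonzero a.e.'', and it is the latter that your indicator-equality step needs.

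The paper avoids this by not asserting a.e.\ indicator equality at all. Instead it argues in the contrapositive: given $\widetilde\bB\neq\bB^*$, it first locates (using the rank and no-equal-columns parts of (C4)) a single pair $(j',k')$ with $\bb^*_{j'}-\bb^*_{k'}$ not a positive multiple of $\widetilde\bb_{j'}-\widetilde\bb_{k'}$, so the sign-mismatch sets $\mathcal{D}_1,\mathcal{D}_2$ have positive measure. It then defines $\mathcal{G}_1,\mathcal{G}_2$ as the sets where the \emph{strict} probability inequality holds, and uses a scaling-along-rays argument together with the ``changes value at a non-zero point'' clause of (C1) to show that $\mathcal{H}_1=\mathcal{D}_1\cap\mathcal{G}_1$ or $\mathcal{H}_2=\mathcal{D}_2\cap\mathcal{G}_2$ has positive measure. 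On that set both the sign mismatch and the strict probability gap hold simultaneously, which yields $E[S_n(\bB^*)]>E[S_n(\widetilde\bB)]$ directly. Your argument can be repaired along these lines, but as written the half-space step does not go through for weakly monotone $U$.
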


\begin{proof}
For a given $\bx$ such that $\bx^T\bb_j^* >\bx^T\bb_k^*$, we have:
\begin{equation}
P_{\beps|\bx}(y_j>y_k)\geq P_{\beps|\bx}(y_k>y_j), 
\end{equation}  
where $\by^T = U(\bx^T\bB^* + \beps^T)$, and $\bb_j^*$ is the $j$-th column of $\bB^*$.  For any matrix $\bB\in\mathcal{B}$, we have:
\vspace*{-0.05 in}
\begin{align}
&E_{\bx,\beps}\Big[\bid(y_{j}>y_{k})\bid(\bx^T\bb_j > \bx^T\bb_k)  +\bid(y_{j}<y_{k})\bid(\bx^T\bb_j < \bx^T\bb_k)\Big]  \label{eq:maxim} \\ = &\quad E_{\bx}\Big[P_{\beps|\bx}(y_{j}>y_{k})\bid(\bx^T\bb_j > \bx^T\bb_k) + P_{\beps|\bx}(y_{j}<y_{k})\bid(\bx^T\bb_j < \bx^T\bb_k)\Big]. \nonumber
\end{align}
For any $j$ and $k$, $\bb^*_j$ and $\bb^*_k$ maximize this expected value, because for any given $\bx$, the larger term between $P_{\beps|\bx}(y_{j}>y_{k})$ and $P_{\beps|\bx}(y_{j}<y_{k})$ is chosen in the expected value. Therefore, $\bB^*$ maximizes the expected value of each of the terms in~(\ref{eq:opt}) and consequently, maximizes $E_{\bx,\beps}\left[S_n(\bB)\right]$. Next, we show that $\bB^*$ is the unique maximizer of $E_{\bx,\beps}\left[S_n(\bB)\right]$ over the set $\mathcal{B}$. 

For any $\widetilde{\bB}\in\mathcal{B}, \widetilde{\bB} \neq \bB^*$, we show that
\vspace*{-0.05 in}
 \begin{equation}
E_{\bx,\beps}\left[S_n(\bB^*)\right] - E_{\bx,\beps}\left[S_n(\widetilde{\bB})\right]>0.
\end{equation}
For any pair of $j,k\in\{1,2,\ldots,q\}, k\neq j$, we can define the following two sets:
\vspace*{-0.05 in}
\begin{align*}
\mathcal{D}_1 & \triangleq \{\bx\in\mathcal{R}^p: \bx^T(\bb^*_j - \bb^*_k)>0 \text{ and } \bx^T(\widetilde{\bb}_j-\widetilde{\bb}_k)<0\}, \\
\mathcal{D}_2 & \triangleq \{\bx\in\mathcal{R}^p: \bx^T(\bb^*_j - \bb^*_k)<0 \text{ and } \bx^T(\widetilde{\bb}_j-\widetilde{\bb}_k)>0\}.
\end{align*}
If $(\bb^*_j - \bb^*_k) \neq c (\widetilde{\bb}_j-\widetilde{\bb}_k)$ for some $c>0$, then both $\mathcal{D}_1$ and $\mathcal{D}_2$ have Lebesgue measure greater than 0. We claim that given $\widetilde{\bB}\in\mathcal{B}, \widetilde{\bB} \neq \bB^*$ and {\bf (C4)}, a pair $j$ and $k$ such that $(\bb^*_j - \bb^*_k) \neq c (\widetilde{\bb}_j-\widetilde{\bb}_k)$ exists. 
We prove this by contradiction. Assume that such a pair does not exist. Then, since the last columns of both $\widetilde{\bB}$ and $\bB^*$ are zero, setting $k=q$ implies that there exists $c_i>0$ such that $\bb^*_i = c_i \widetilde{\bb}_i$ for $i=1,\ldots,q-1$. Also, for any $j, k \in\{1,\ldots,q-1\}$, there must exist a $c_{j,k}>0$ such that $(\bb^*_j - \bb^*_k) = c_{j,k} (\widetilde{\bb}_j-\widetilde{\bb}_k)$. Combining these two, we get $(1-c_{j,k}/c_j)\bb^*_j = (1-c_{j,k}/c_k)\bb^*_k$. Since this holds for all $j$ and $k$, we can conclude that all columns of $\bB^*$ are multiples of each other and consequently, $\bB^*$ has rank 1, which is a contradiction. Therefore, there exists a pair $j$ and $k$ for which both $\mathcal{D}_1$ and $\mathcal{D}_2$ have Lebesgue measure greater than 0.

Now, define the following two sets:
\begin{align}
\mathcal{G}_1 & \triangleq \Big\{\bx\in\mathcal{R}^p: \bx^T(\bb^*_j - \bb^*_k)>0 \text{ and }  E[U(\bx^T\bb^*_j+\beps_j)] > E[U(\bx^T\bb^*_k+\beps_k)]\Big\}, \label{eq:G1}\\
\mathcal{G}_2 & \triangleq \Big\{\bx\in\mathcal{R}^p: \bx^T(\bb^*_j - \bb^*_k)<0 \text{ and } \nonumber E[U(\bx^T\bb^*_j+\beps_j)] < E[U(\bx^T\bb^*_k+\beps_k)]\Big\}.
\end{align}
Next, we show that $\mathcal{H}_1 = \mathcal{D}_1 \cap \mathcal{G}_1$ and/or $\mathcal{H}_2 = \mathcal{D}_2 \cap \mathcal{G}_2$ have Lebesgue measure greater than 0. This is trivial if $U$ is strictly increasing, since $\mathcal{H}_1 = \mathcal{D}_1$ and $\mathcal{H}_2 = \mathcal{D}_2$.  We show that $\mathcal{H}_1$ and/or $\mathcal{H}_2$ have positive measure in general. Take an $\bx\in\mathcal{D}_1$; it is clear that for any $\alpha>0$, $\alpha\bx$ is also in $\mathcal{D}_1$, and $-\alpha\bx$ is in $\mathcal{D}_2$. If we change $\alpha$ from $0^+$ to $+\infty$, then $\alpha\bx^T(\bb^*_j - \bb^*_k)$ changes from $0^+$ to $+\infty$ and $-\alpha\bx^T(\bb^*_j - \bb^*_k)$ changes from $0^-$ to $-\infty$. 
Since $U$ is non-degenerate and changes value at a non-zero point, there exists a neighborhood $\mathcal{A}$ such that for $\alpha\in\mathcal{A}$, we have $E\left[U(\alpha\bx^T\bb^*_j+\beps_j)\right] > E\left[U(\alpha\bx^T\bb^*_k+\beps_k)\right]$ and/or  $E\left[U(-\alpha\bx^T\bb^*_j+\beps_j)\right] < E\left[U(-\alpha\bx^T\bb^*_k+\beps_k)\right]$. Thus $\mathcal{H}_1$ and/or  $\mathcal{H}_2$ have Lebesgue measure greater than 0.

Without loss of generality, assume that $\mathcal{H}_1$ defined for $j=j'$ and $k=k'$ has Lebesgue measure greater than 0. In the following we show that $E[S_n(\bB^*)] >E[S_n(\widetilde{\bB})]$, which proves the lemma. We have:
\begin{align}
n\binom{q}{2} \cdot & E_{\bx,\beps}\big[S_n(\bB^*)-S_n(\widetilde{\bB})\big] \nonumber \\ 
 = & \sum_{i,j,k} E_{\bx,\beps}\Big[\bid(y_{ij}\!\!>\!\!y_{ik})\Big(\bid(\bx_i^T\bb^*_j\!\!>\!\!\bx_i^T\bb^*_k)-\bid(\bx_i^T\widetilde{\bb}_j\!\!>\!\!\bx_i^T\widetilde{\bb}_k)\Big) \nonumber \\
 & \qquad + \bid(y_{ij}\!\!<\!\!y_{ik})\Big(\bid(\bx_i^T\bb^*_j\!\!<\!\!\bx_i^T\bb^*_k)-\bid(\bx_i^T\widetilde{\bb}_j\!\!<\!\!\bx_i^T\widetilde{\bb}_k)\Big)\Big] \label{eq:pairs}  \\ 
 \geq & \sum_i P(\bx_i\!\in\!\mathcal{H}_1) E_{\bx,\beps}\Big[\bid(y_{ij'}>y_{ik'})  - \bid(y_{ij'}<y_{ik'})\Big|\bx_i\in\mathcal{H}_1 \Big]  \label{eq:cond}\\
= &  \sum_i P(\bx_i\!\in\!\mathcal{H}_1) E_{\bx}\Big[P_{\beps|\bx_i}(y_{ij'}>y_{ik'})\!-\!  P_{\beps|\bx_i}(y_{ij'}<y_{ik'})\Big|\bx_i\in\mathcal{H}_1 \Big]\!\!>\!\!0\label{eq:cond2}
\end{align}
The summation in~(\ref{eq:pairs}) is over all distinct pairs, $j$ and $k$. The inequality in~(\ref{eq:cond}) is based on the argument made for~(\ref{eq:maxim}); for any given $i$, $j$, and $k$, the expectation in~(\ref{eq:pairs}) is greater than or equal to zero (because $\bB^*$ maximizes~(\ref{eq:maxim}) for every $\bx$ and any pair $j$ and $k$). Therefore, by removing all pairs of $j$ and $k$ such that $j\neq j'$ and $k \neq k'$, and conditioning only on $\bx_i\in\mathcal{H}_1$, we get a smaller term. The first term in~(\ref{eq:cond2}) is strictly greater than zero because $\mathcal{H}_1$ has positive measure, and hence $P(\bx_i\!\in\!\mathcal{H}_1)>0$. The second term  is positive, because for any $\bx_i\in\mathcal{G}_1$, the term inside $E_{\bx}$ is strictly greater than zero.
\end{proof}

\begin{mylemma}[Convergence]
Given {\bf(C1)}---{\bf (C4)}, and denoting $$h_i(\bB) = \frac{1}{\binom{q}{2}} \sum_{j=1}^q \sum_{k=1}^q \bid(y_{ij}>y_{ik})\bid(\bx_i^T\bb_j > \bx_i^T\bb_k),$$
we have: $$ S_n(\bB) \stackrel{a.s.}{\xrightarrow{\hspace*{0.5cm}}} E\left[h_i(\bB)\right]$$
\end{mylemma}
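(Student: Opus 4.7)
The plan is to recognize that Lemma 2 is really a direct application of the Strong Law of Large Numbers once the appropriate i.i.d.\ structure is identified. First I would observe that the definition gives $S_n(\bB) = \frac{1}{n}\sum_{i=1}^n h_i(\bB)$, so the statement reduces to showing that the sample mean of $h_1(\bB), h_2(\bB), \ldots$ converges almost surely to its expectation.

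Next I would verify the hypotheses of Kolmogorov's SLLN. By condition \textbf{(C3)}, the rows $\bx_i$ are i.i.d., and by \textbf{(C2)} the rows $\beps_i$ of $\bE$ are i.i.d.\ and independent of $\bX$; hence the pairs $(\bx_i,\beps_i)$ are i.i.d. Since $\by_i^T = U(\bx_i^T\bB^* + \beps_i^T)$ is a deterministic function of $(\bx_i,\beps_i)$, and $h_i(\bB)$ depends on the sample only through $(\bx_i,\by_i)$, the random variables $\{h_i(\bB)\}_{i\geq 1}$ are i.i.d.\ for every fixed $\bB$. Moreover, $h_i(\bB)$ is an average of indicators and thus bounded: $0 \leq h_i(\bB) \leq 1$, so in particular $E|h_i(\bB)| < \infty$.

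Applying the classical Kolmogorov SLLN to the i.i.d., integrable sequence $\{h_i(\bB)\}$ yields
$$S_n(\bB) = \frac{1}{n}\sum_{i=1}^n h_i(\bB) \xrightarrow{a.s.} E[h_i(\bB)],$$
which is exactly the claimed convergence. There is no serious obstacle here; the proof is essentially bookkeeping to confirm the i.i.d.\ structure inherited from \textbf{(C2)}--\textbf{(C3)} and the trivial boundedness of an indicator sum. The more delicate issue of upgrading this pointwise (in $\bB$) convergence to a \emph{uniform} convergence over $\mathcal{B}_\eta$, which is what would be required for the M-estimator consistency argument, is not part of Lemma 2's statement and would be addressed in Theorem 1.
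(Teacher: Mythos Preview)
Your argument is correct for the pointwise statement as written: for each fixed $\bB$, the $h_i(\bB)$ are i.i.d.\ and bounded, so the classical SLLN gives $S_n(\bB)\to E[h_i(\bB)]$ almost surely. This is much shorter than what the paper does.

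However, the paper's proof of Lemma~2 actually establishes something strictly stronger than the displayed statement: it shows
\[
\max_{\bB\in\mathcal{B}_\eta}\big|S_n(\bB)-h(\bB)\big|\longrightarrow 0\quad\text{a.s.},
\]
via a compactness/finite-cover argument combined with Borel--Cantelli. The key point is that your final remark is mistaken: the uniform convergence is \emph{not} deferred to Theorem~1. If you read the paper's proof of Theorem~1, it explicitly invokes ``From the result of Lemma~2, \ldots\ $|S_n(\bB)-h(\bB)|<\zeta/2$ for all $\bB\in\mathcal{B}_\eta$ with probability~1,'' i.e.\ it consumes the uniform conclusion directly. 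Theorem~1 contains no additional argument upgrading pointwise to uniform.

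So the situation is that the paper's Lemma~2 is understated relative to its own proof, and what Theorem~1 actually needs is the uniform version. Your SLLN argument, while valid for the literal statement, would leave a genuine gap in the overall consistency proof unless you supply the uniformity somewhere. The paper's route (continuity of $h_i$ and $h$ on $\mathcal{B}_\eta$, compactness, finite subcover, then SLLN on the $\overline{g_i}$ and $\underline{g_i}$) is precisely the extra work needed to close that gap.
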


\begin{proof}
Define 
\begin{align*}
 & h(\bB) \triangleq  E\left[h_i(\bB)\right] \quad ; \quad D_{\delta}(\bB) \triangleq  \{\bet\in\mathbb{R}^{p\times q}: \bet \in \mathcal{B}, \|\bet-\bB\|_F< \delta\} \\ &
 \overline{g_i}(\bB,\delta) \triangleq  \sup_{\bet \in D_{\delta}(\bB)}  \{h_i(\bet) - h(\bet)\} \quad ; \quad \underline{g_i}(\bB,\delta) \triangleq \inf_{\bet \in D_{\delta}(\bB)}  \{h_i(\bet) - h(\bet)\} \\
&   \overline{g}(\bB,\delta) \triangleq  E\left[\overline{g_i}(\bB,\delta)\right] \quad ; \quad \underline{g}(\bB,\delta) \triangleq  E\left[\underline{g_i}(\bB,\delta)\right] 
\end{align*}
It is clear that $h_i(\bB)$ is bounded and moreover, given {\bf (C3)}, $h_i(\bB)$, and consequently $h(\bB)$, are continuous in $\bB\in \mathcal{B_{\eta}}$ almost surely. Therefore, since $\mathcal{B_{\eta}}$ is compact and separable, for any $\bB\in\mathcal{B_{\eta}}$, there exists a sequence $\{\bB_t\}$ in a countable dense subset of $\mathcal{B_{\eta}}$ such that:
\begin{equation}
\lim_{t\to\infty} h_i(\bB_t) = h_i(\bB) \quad ; \quad  \lim_{t\to\infty} h(\bB_t) = h(\bB). \nonumber
\end{equation}
Thus, almost surely:
\begin{equation}
\lim_{\delta\to 0} \overline{g_i}(\bB,\delta) = h_i(\bB) - h(\bB) \quad ; \quad   \lim_{\delta\to 0} \underline{g_i}(\bB,\delta) = h_i(\bB) - h(\bB). \nonumber
\end{equation}
Taking the expected value of both sides of these limits, we get for all $\bB\in\mathcal{B}_{\eta}$:
\begin{equation} \label{eq:lim0}
\lim_{\delta\to 0} \overline{g}(\bB,\delta) = 0; \qquad \qquad \lim_{\delta\to 0} \underline{g}(\bB,\delta) = 0. 
\end{equation}
We use the Borel--Cantelli lemma~\citep{Kle13} to prove the lemma.
\begin{align}
& \sum_{n=1}^{\infty} P\left(\max_{\bB\in\mathcal{B}_{\eta}} \left|S_n(\bB) - h(\bB)\right|>\epsilon\right) \nonumber \\
& = \sum_{n=1}^{\infty} P\left(\max_{\bB\in\mathcal{B}_{\eta}} \left|\frac{1}{n} \sum_{i=1}^n \left(h_i(\bB) - h(\bB)\right)\right|>\epsilon\right) \label{eq:conv1}\\
& \leq \sum_{n=1}^{\infty} P\left(\left|\frac{1}{n}\sum_{i=1}^n \sup_{\bB\in\mathcal{B}_{\eta}} \left(h_i(\bB) - h(\bB)\right)\right| > \epsilon \right) \label{eq:conv3} + \sum_{n=1}^{\infty} P\left(\left|\frac{1}{n}\sum_{i=1}^n \inf_{\bB\in\mathcal{B}_{\eta}} \left(h_i(\bB) - h(\bB)\right)\right| > \epsilon \right).  
\end{align}
$\mathcal{B}_{\eta}$ is a compact set and also $\mathcal{B}_{\eta} \subset \mathop{\bigcup}_{\bet\in\mathcal{B}_{\eta}} D_{\delta}(\bet)$ for any small $\delta$. However, for a compact set, every open cover has a finite subcover. Thus, for a finite $L$, there exist $\bet_1, \bet_2, \ldots, \bet_L$, where $\bet_l\in\mathcal{B}_{\eta}$, and:
\begin{equation}
\mathcal{B}_{\eta} \subset \mathop{\bigcup}_{l=1}^L D_{\delta_l}(\bet_l)
\end{equation}
Also, assume that for $l=1, \ldots, L$, we have $\left|\overline{g}(\bet_l,\delta_l)\right| < \epsilon/2$ and $\left|\underline{g}(\bet_l,\delta_l)\right| < \epsilon/2$. This is possible to achieve because of (\ref{eq:lim0}). Now, from (\ref{eq:conv1})--(\ref{eq:conv3}), we have:
\begin{align}
 & \sum_{n=1}^{\infty} P\left(\max_{\bB\in\mathcal{B}_{\eta}} \left|S_n(\bB) - h(\bB)\right|>\epsilon\right) \\&  \leq  \sum_{n=1}^{\infty}\sum_{l=1}^L P\left(\left|\frac{1}{n}\sum_{i=1}^n \overline{g_i}(\bet_l,\delta_l) \right| > \epsilon \right) + \sum_{n=1}^{\infty}\sum_{l=1}^L P\left(\left|\frac{1}{n}\sum_{i=1}^n \underline{g_i}(\bet_l,\delta_l) \right| > \epsilon \right) \nonumber \\
& \leq  \sum_{n=1}^{\infty}\sum_{l=1}^L P\left(\left|\frac{1}{n}\sum_{i=1}^n \overline{g_i}(\bet_l,\delta_l)-\overline{g}(\bet_l,\delta_l) \right| > \epsilon/2 \right) \label{eq:BC2} \\ 
& \quad \quad + \sum_{n=1}^{\infty}\sum_{l=1}^L P\left(\left|\frac{1}{n}\sum_{i=1}^n \underline{g_i}(\bet_l,\delta_l)-\underline{g}(\bet_l,\delta_l) \right| > \epsilon/2 \right) \nonumber.
\end{align}
For each $l$, we can invoke the Small Law of Large Numbers for {\it U}-statistics~\citep[Chapter 5]{Ser09} to get:
\begin{align}
\frac{1}{n}\sum_{i=1}^n \overline{g_i}(\bet_l,\delta_l)\stackrel{a.s.}{\xrightarrow{\hspace*{0.5cm}}}\overline{g}(\bet_l,\delta_l)  \quad ; \quad 
\frac{1}{n}\sum_{i=1}^n \underline{g_i}(\bet_l,\delta_l)\stackrel{a.s.}{\xrightarrow{\hspace*{0.5cm}}}\underline{g}(\bet_l,\delta_l)\label{eq:asconverg}
\end{align}
In the following, we use the Borel-Cantelli lemma~\citep{Kle13} twice to get the desired result.  Reordering  the summands,  we can re-write the first term in~(\ref{eq:BC2}) as follows: $$\sum_{l=1}^L \underbrace{\sum_{n=1}^{\infty} P\left(\left|\frac{1}{n}\sum_{i=1}^n \overline{g_i}(\bet_l,\delta_l)-\overline{g}(\bet_l,\delta_l) \right| > \epsilon/2 \right)}_{(\star)}.$$ According to the Borel-Cantelli lemma, the almost sure convergence in~(\ref{eq:asconverg}) implies that each of terms marked by $(\star)$ is finite, and thus their sum is finite (since $L$ is finite). The same argument can be made for the second term in (\ref{eq:BC2}). Therefore, $\sum_{n=1}^{\infty} P\left(\max_{\bB\in\mathcal{B}_{\eta}} \left|S_n(\bB) - h(\bB)\right|>\epsilon\right)$ is less than something finite, and hence, is finite. Applying the Borel-Cantelli result proves the lemma.
\end{proof}

\begin{mytheo}
Given {\bf(C1)}---{\bf (C4)}, the solution of~(\ref{eq:opt}) over the set $\mathcal{B_{\eta}}$, $\widehat{\bB}_n$, is strongly consistent; i.e.,  $$ \widehat{\bB}_n \to \bB^* \quad \text{almost surely}.$$
\end{mytheo}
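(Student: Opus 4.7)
The plan is to deduce the theorem from the two lemmas via the standard argmax--consistency argument. The two ingredients already in hand are (i) Lemma~1, which says $\bB^*$ is the \emph{unique} maximizer of $h(\bB) = E[h_i(\bB)]$ over $\mathcal{B}$ (and hence over the smaller set $\mathcal{B}_\eta$, since by construction $\bB^*\in\mathcal{B}_\eta$), and (ii) Lemma~2, which gives almost sure uniform convergence $\sup_{\bB\in\mathcal{B}_\eta}|S_n(\bB)-h(\bB)|\to 0$. Combined with the compactness of $\mathcal{B}_\eta$ and the almost sure continuity of $h$ noted in the proof of Lemma~2, these are exactly the hypotheses needed.

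The argument I would carry out has three short steps. First, fix a realization in the probability--one event on which Lemma~2 holds, and consider any subsequence $\{\widehat{\bB}_{n_k}\}$. By compactness of $\mathcal{B}_\eta$ one can extract a further subsequence $\widehat{\bB}_{n_{k_j}}\to \bB^\dagger\in\mathcal{B}_\eta$. Second, use uniform convergence twice: on the one hand $S_{n_{k_j}}(\widehat{\bB}_{n_{k_j}})-h(\widehat{\bB}_{n_{k_j}})\to 0$ and $h(\widehat{\bB}_{n_{k_j}})\to h(\bB^\dagger)$ by continuity, so $S_{n_{k_j}}(\widehat{\bB}_{n_{k_j}})\to h(\bB^\dagger)$; on the other hand $S_{n_{k_j}}(\bB^*)\to h(\bB^*)$. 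Third, invoke the defining optimality $S_{n_{k_j}}(\widehat{\bB}_{n_{k_j}})\ge S_{n_{k_j}}(\bB^*)$ and pass to the limit to obtain $h(\bB^\dagger)\ge h(\bB^*)$. By Lemma~1, $\bB^*$ is the unique maximizer of $h$ on $\mathcal{B}$, forcing $\bB^\dagger = \bB^*$. Since every subsequence of $\{\widehat{\bB}_n\}$ has a further subsequence converging to $\bB^*$, the whole sequence converges to $\bB^*$ almost surely.

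The only point that needs a small extra verification is that $\bB^*$ lies in $\mathcal{B}_\eta$ and is an interior point of it; this is guaranteed by condition (C4) together with the choice of $\eta$ made before the theorem (which was chosen smaller than the absolute values of the nonzero entries characterizing each column of $\bB^*$). This ensures that the comparison $S_{n_{k_j}}(\widehat{\bB}_{n_{k_j}})\ge S_{n_{k_j}}(\bB^*)$ is legitimate, since both arguments lie in the feasible set $\mathcal{B}_\eta$ over which the optimization is performed.

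I do not anticipate any real obstacle: once Lemmas~1 and~2 are in place, this is the textbook argmax-continuous-mapping argument. The most delicate bookkeeping step is simply the appeal to continuity of $h$ on $\mathcal{B}_\eta$ to turn $\widehat{\bB}_{n_{k_j}}\to\bB^\dagger$ into $h(\widehat{\bB}_{n_{k_j}})\to h(\bB^\dagger)$; this continuity was observed in passing during the proof of Lemma~2 (as a consequence of the absolute-continuity assumption (C3.2), which rules out ties $\bx^T\bb_j=\bx^T\bb_k$ with positive probability and therefore makes the indicator-based $h_i$ continuous in $\bB$ almost surely), so no new work is required.
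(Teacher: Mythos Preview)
Your proposal is correct and uses essentially the same approach as the paper: both deduce consistency from Lemma~1 (unique maximizer), Lemma~2 (uniform almost sure convergence over $\mathcal{B}_\eta$), and compactness of $\mathcal{B}_\eta$. The only cosmetic difference is packaging: the paper argues directly via well--separation, defining $\zeta = h(\bB^*) - \max_{\bB\in\mathcal{B}_\eta\setminus\mathcal{B}_0} h(\bB) > 0$ for an arbitrary neighborhood $\mathcal{B}_0$ of $\bB^*$ and using uniform convergence with tolerance $\zeta/2$ to force $\widehat{\bB}_n\in\mathcal{B}_0$ eventually, whereas you phrase the same content as a subsequence/argmax--continuous--mapping argument. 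Both presentations are standard and rely on exactly the same ingredients, including the continuity of $h$ on $\mathcal{B}_\eta$ that you correctly flag.
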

\begin{proof}
Given the results of Lemmas 1 and 2, we are now ready to prove the consistency of the solution of~(\ref{eq:opt}) over $\mathcal{B}_{\eta}$. We do so by showing that any set, $\mathcal{B}_0\subset\mathbb{R}^{p\times q}$, that contains $\bB^*$, also contains $\widehat{\bB}_n$ as $n\to\infty$. 

Define $\mathcal{B}_1 \triangleq \mathcal{B}_{\eta} - (\mathcal{B}_0 \cap \mathcal{B}_{\eta})$. $\mathcal{B}_1$ is compact and there exists $\zeta = h(\bB^*) - \max_{\bB \in \mathcal{B}_1} h(\bB)$ which is always greater than 0 (because $\bB^*$ attains the unique maximum and $\bB^*\not\in \mathcal{B}_1$ ). From the result of Lemma~2, we know that for any $\zeta$, there is  an $N$, such that for $n>N$, $|S_n(\bB) - h(\bB)|<\zeta/2$ for all $\bB\in\mathcal{B}_{\eta}$ with probability 1. This implies that $\widehat{\bB}_n$ cannot be in $\mathcal{B}_1$; because otherwise, we get $h(\bB^*) - S_n(\widehat{\bB}_n) > \zeta/2$ which is in contradiction with almost sure convergence. Thus, $\widehat{\bB}_n \in \mathcal{B}_0$ with probability 1, and since this is true for any $\mathcal{B}_0$, we have $ \widehat{\bB}_n \to \bB^*$ almost surely.
\end{proof}

\section{Rate of Convergence}\label{sec:rate}
For ease of notation, let $\bthe \in \mathbb{R}^{p(q-1)}$ be the vectorization of the matrix $\bB\in\mathcal{B}$, except the last column which is assumed to be all zero. Thus,
\begin{equation}
\bthe \triangleq (B_{1,1},B_{2,1},\ldots, B_{p,1}, \ldots, B_{1,q-1}, \ldots, B_{p,q-1}).
\end{equation}
For $\bB\in\mathcal{B}$, the corresponding $\bthe$ is in $\bth$, the set of $d$-dimensional vectors with norm 1. So, we can denote $\bB$ and its columns as functions of $\bthe$, and write:
\begin{align}
h(\bz,\bthe) = & \sum_{j=1}^q \sum_{k=1}^q \bid(y_{j}\!\!>\!\!y_{k})\bid(\bx^T\bb_j(\bthe)\!\!>\!\!\bx^T\bb_k(\bthe)),
\end{align}
where $\bz = (\by, \bx)\in \mathbb{R}^{p+q}$ is the joint vector of predictors and responses for an instance, and $\bb_j$ denotes the $j$'th column of $\bB$.
Let $\widehat{\bthe}_n$ correspond to $\widehat{\bB}_n$ and $\bthe_0$ correspond to $\bB^*$.
Based on {\bf (C4)} we know that $\bthe_0$ is an interior point of $\bth$.
In the previous section, we showed that $\widehat{\bthe}_n  \stackrel{a.s.}{\xrightarrow{\hspace*{0.5cm}}} \bthe_0$. 
In this section, we study the rate of convergence and show that $\|\widehat{\bthe}_n-\bthe_0\|^2 \leq o_p(1/\sqrt{n})$, where $\| \cdot \|$ is the Euclidean norm.

The following Lemma plays a critical role in establishing the rate of convergence. Its proof, using results from~\citep{ Pakes, Nolan, Sherman}, is included in the Appendix. 
\begin{mylemma}\label{lem:euc}
For $\bthe$ in an $o_p(1)$ neighborhood of $\bthe_0$, and $S(\bthe) \triangleq E_{\bz}\left[h(\bz,\bthe)\right]$, we have:
\begin{equation}
S_n(\bthe) = S(\bthe) + S_n(\bthe_0) - S(\bthe_0) + o_p(1/\sqrt{n}).
\end{equation} 
\end{mylemma}
\begin{proof}
See the Appendix.
\end{proof}

For the next Theorem we require that there exists an $o_p(1)$ neighborhood $\mathcal{A}$ of $\bthe_0$ and a constant $\kappa>0$ for which $S(\bthe)-S(\bthe_0) \leq -\kappa \|\bthe-\bthe_0\|^2$ for all $\bthe\in \mathcal{A}$. Assume $\nabla$ and $\nabla_2$ denote the first and second order derivatives with respect to $\bthe$. The existence of $\mathcal{A}$ is guaranteed since $\bthe_0$ is an interior point of $\bth$.  Also, $\kappa>0$ exists if $\nabla_2 S(\bthe_0)$ is negative definite, because $\nabla S(\bthe_0)={\bf 0}$ (since $S$ is maximized at $\bthe_0$) and in $o_p(1)$ neighborhood of $\bthe_0$, Taylor expansion gives us:
\begin{equation}
S(\bthe)-S(\bthe_0) = \frac{1}{2}(\bthe-\bthe_0)^T \nabla_2S(\bthe_0)(\bthe-\bthe_0) + o_p(\|\bthe-\bthe_0\|^2).
\end{equation}

Since $\nabla_2 S(\bthe_0)$ is negative definite, there exists a positive $\alpha$ such that $\frac{1}{2}(\bthe-\bthe_0)^T \nabla_2S(\bthe_0)(\bthe-\bthe_0) \leq -\alpha \|\bthe-\bthe_0\|^2 $. In an $o_p(1)$ neighborhood of $\bthe_0$, setting $\kappa = (1-\epsilon)\alpha$ for any small positive $\epsilon$ gives $S(\bthe)-S(\bthe_0) \leq -\alpha \|\bthe-\bthe_0\|^2 + o_p(\|\bthe-\bthe_0\|^2) \leq -\kappa \|\bthe-\bthe_0\|^2$. In sum, a sufficient condition for the existence of $\kappa>0$ is the following:
\begin{itemize}
\item[{\bf (C5)}] The matrix $\nabla_2 S(\bthe_0)$ is negative definite.
\end{itemize}

\begin{mytheo}
Assume that there exists an $o_p(1)$ neighborhood $\mathcal{A}$ of $\bthe_0$ and a constant $\kappa>0$ for which $S(\bthe)-S(\bthe_0) \leq -\kappa \|\bthe-\bthe_0\|^2$ for all $\bthe\in \mathcal{A}$. Then, the squared estimation error decays with a rate faster than  $1/\sqrt{n}$: $\|\widehat{\bthe}_n-\bthe_0\|^2 \leq o_p(1/\sqrt{n})$.
\end{mytheo}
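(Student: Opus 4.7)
The plan is to combine three ingredients: (i) the strong consistency $\widehat{\bthe}_n \to \bthe_0$ from Theorem~1, (ii) the asymptotic expansion of Lemma~3, and (iii) the quadratic upper bound $S(\bthe)-S(\bthe_0) \leq -\kappa \|\bthe-\bthe_0\|^2$ that is hypothesized (and which, as the text notes, follows from \textbf{(C5)} via a Taylor expansion around $\bthe_0$). The whole argument is a ``basic inequality'' squeeze, familiar from M-estimation theory.

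First I would note that, by Theorem~1, $\widehat{\bthe}_n$ is eventually inside the neighborhood $\mathcal{A}$ of $\bthe_0$ with probability tending to one, so both Lemma~3 and the quadratic bound apply at $\bthe=\widehat{\bthe}_n$. Next, I would apply Lemma~3 at $\bthe=\widehat{\bthe}_n$ and rearrange the resulting identity to obtain
\begin{equation}
S(\widehat{\bthe}_n) - S(\bthe_0) \;=\; S_n(\widehat{\bthe}_n) - S_n(\bthe_0) + o_p(1/\sqrt{n}).
\end{equation}
Because $\widehat{\bthe}_n$ maximizes $S_n$ over $\mathcal{B}_{\eta}$ (equivalently over $\bth$), we have $S_n(\widehat{\bthe}_n) - S_n(\bthe_0) \geq 0$, giving the lower bound
\begin{equation}
S(\widehat{\bthe}_n) - S(\bthe_0) \;\geq\; -\,o_p(1/\sqrt{n}).
\end{equation}
On the other hand, the quadratic upper bound on $\mathcal{A}$ gives $S(\widehat{\bthe}_n) - S(\bthe_0) \leq -\kappa\,\|\widehat{\bthe}_n-\bthe_0\|^2$. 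Chaining these two inequalities produces
\begin{equation}
\kappa\,\|\widehat{\bthe}_n-\bthe_0\|^2 \;\leq\; o_p(1/\sqrt{n}),
\end{equation}
which is exactly the claim after dividing through by the positive constant $\kappa$.

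The only delicate point is bookkeeping with the $o_p(1/\sqrt{n})$ term: Lemma~3 is stated for $\bthe$ in an $o_p(1)$ neighborhood of $\bthe_0$, so to plug in the random point $\widehat{\bthe}_n$ I need to justify that the expansion can be evaluated at a consistent estimator. This is standard — one formalizes it by working on the high-probability event $\{\widehat{\bthe}_n \in \mathcal{A}\}$, whose probability tends to one by consistency, and noting that on this event Lemma~3 provides a uniform $o_p(1/\sqrt{n})$ remainder. I expect this ``evaluate the stochastic equicontinuity statement at the estimator'' step to be the subtlest part of the write-up, but conceptually it is the routine bridge between a uniform-in-$\bthe$ expansion and its plug-in consequence for the M-estimator. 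Everything else is algebraic manipulation of the basic inequality above.
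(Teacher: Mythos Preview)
Your proposal is correct and follows essentially the same approach as the paper's proof: start from the basic inequality $S_n(\widehat{\bthe}_n)\ge S_n(\bthe_0)$, rewrite via Lemma~3 to transfer the inequality to $S$, and then squeeze with the quadratic upper bound $S(\bthe)-S(\bthe_0)\le -\kappa\|\bthe-\bthe_0\|^2$. Your write-up is, if anything, a bit more careful than the paper's in spelling out why the random point $\widehat{\bthe}_n$ may be plugged into Lemma~3 (via consistency from Theorem~1), but the argument is the same.
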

\begin{proof}
By definition of $\widehat{\bthe}_n$ we have:
\begin{equation}
0 \leq S_n(\widehat{\bthe}_n) - S_n(\bthe_0).
\end{equation}
Rewriting this inequality using the result of Lemma~\ref{lem:euc} gives:
\begin{align}\label{eq:op}
0 & \leq S(\widehat{\bthe}_n) - S(\bthe_0) + o_p(1/\sqrt{n}) \leq -\kappa \|\widehat{\bthe}_n-\bthe_0\|^2 + o_p(1/\sqrt{n}).
\end{align}
which gives us $\|\widehat{\bthe}_n-\bthe_0\|^2 \leq o_p(1/\sqrt{n})$.
\end{proof}

\begin{corol} Given {\bf(C1)}---{\bf (C5)},  we have $\|\widehat{\bthe}_n-\bthe_0\|^2 \leq o_p(1/\sqrt{n})$.
\end{corol}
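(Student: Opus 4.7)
The plan is to show that condition \textbf{(C5)} implies the hypothesis of Theorem~2, and then simply invoke Theorem~2. All the ingredients are already assembled in the preceding discussion, so the proof reduces to bookkeeping.

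First, I would note that under \textbf{(C1)}--\textbf{(C4)}, Theorem~1 gives $\widehat{\bthe}_n \to \bthe_0$ almost surely, and condition \textbf{(C4)} guarantees that $\bthe_0$ is an interior point of $\bth$. Hence there is an open (and in particular $o_p(1)$) neighborhood $\mathcal{A} \subset \bth$ of $\bthe_0$ on which smooth Taylor expansion of $S$ at $\bthe_0$ is valid. Since Lemma~1 shows that $\bthe_0$ is the unique maximizer of $S$, we have $\nabla S(\bthe_0) = \mathbf{0}$, so the second-order expansion in $\mathcal{A}$ reads
\begin{equation}
S(\bthe) - S(\bthe_0) \;=\; \tfrac{1}{2}(\bthe - \bthe_0)^T \nabla_2 S(\bthe_0)(\bthe - \bthe_0) \;+\; o_p(\|\bthe - \bthe_0\|^2).
\end{equation}

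Next, I would use \textbf{(C5)}. Negative-definiteness of $\nabla_2 S(\bthe_0)$ means its largest eigenvalue is some $-2\alpha < 0$, so the quadratic form is bounded above by $-\alpha\|\bthe - \bthe_0\|^2$. Shrinking $\mathcal{A}$ if necessary so that the remainder term satisfies $|o_p(\|\bthe-\bthe_0\|^2)| \le \epsilon \alpha \|\bthe-\bthe_0\|^2$ for some small $\epsilon \in (0,1)$, I would set $\kappa \triangleq (1-\epsilon)\alpha > 0$ and conclude
\begin{equation}
S(\bthe) - S(\bthe_0) \;\le\; -\kappa\,\|\bthe - \bthe_0\|^2 \qquad \text{for all } \bthe \in \mathcal{A}.
\end{equation}
This is exactly the hypothesis of Theorem~2, which then delivers $\|\widehat{\bthe}_n - \bthe_0\|^2 \le o_p(1/\sqrt{n})$.

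I do not expect any real obstacle: the only mildly delicate point is ensuring that the $o_p(\|\bthe-\bthe_0\|^2)$ remainder in the Taylor expansion can indeed be absorbed by the leading quadratic form uniformly on $\mathcal{A}$. This is standard once one fixes a neighborhood on which the smallest eigenvalue of $-\nabla_2 S(\bthe_0)$ dominates the higher-order term, and the consistency statement from Theorem~1 guarantees that $\widehat{\bthe}_n$ lies in this neighborhood eventually with probability one.
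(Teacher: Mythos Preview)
Your proposal is correct and mirrors the paper's own argument: the text immediately preceding Theorem~2 performs exactly the Taylor expansion you describe, uses {\bf (C5)} to obtain $\kappa=(1-\epsilon)\alpha>0$, and states that {\bf (C5)} is a sufficient condition for the hypothesis of Theorem~2, so the Corollary follows at once. The paper offers no separate proof of the Corollary beyond this discussion, so your write-up is essentially a slightly expanded version of what the authors already laid out.
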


\section{Optimization Algorithm}\label{sec:opt}
In the previous section, we showed that solving~(\ref{eq:opt}) provides a consistent estimate of $\bB^*$. However, the objective function is very non-smooth (the sum of many step functions) and finding $\widehat{\bB}_n$ can be challenging. In this section, we propose a fast, greedy algorithm to solve~(\ref{eq:opt}). 

First, consider the following maximization problem:
\begin{equation}\label{eq:opt_simp}
\widehat{x} = \argmax_{x \in \mathbb{R}} \qquad \sum_{t=1}^T \bid(u_t + v_tx>0),
\end{equation}
where $\{u_t\}_{t=1}^T$ and $\{v_t\}_{t=1}^T$ are given real numbers. This objective function is a piece-wise constant function changing values at $\{-u_t/v_t\}_{t=1}^T$ (one step function changes value at each of these points). Therefore, this function takes $O(T)$ different values. Computing each of these values requires $O(T)$ operations, and thus, finding $\widehat{x}$ by computing all the possible function values requires $O(T^2)$ operations. 

We propose an $O(T\log{T})$ algorithm to find $\widehat{x}$. The algorithm works as follows. First, we sort the sequence $\{-u_t/v_t\}_{t=1}^T$, in $O(T\log{T})$. 
Then, we start $x$ from a value less than the first sorted point, i.e., $\min\{-u_t/v_t; t=1,\ldots,T\}$, and at each step, move forward to the next smallest point.
 At each step, we cumulatively add or subtract 1 depending on the sign of $v_t$ (i.e., depending on whether one of the step functions went from 0 to 1 or vice-versa) and keep track of the largest cumulative sum seen so far, and the value of $x$ for which that maximum happened.
After going through all $T$ points, the largest observed cumulative sum  is equal to the maximum value of the objective function, and the corresponding value of $x$ is  $\widehat{x}$. Since the objective function is piece-wise constant, its maximum is attained over an interval; we set $\widehat{x}$ to the center of this interval. Therefore, we can solve~(\ref{eq:opt}) in $O(T\log{T}) + O(T)$ which is equivalent to $O(T\log{T})$. The details of the algorithm are summarized below under Algorithm~1. We use this algorithm repeatedly to solve~(\ref{eq:opt}). 

\begin{algorithm}[!h]
\caption*{ \bf Algorithm 1 --- for solving~(\ref{eq:opt_simp}) }
\begin{algorithmic}
\State {\bf Inputs}: $u_t$, $v_t$, $t=1,2,\ldots,T$ ;  {\bf Output}: $\widehat{x}$
\State \texttt{Compute:} $r_t = -u_t/v_t, t=1,2,\ldots,T$
\State \texttt{Sort} $r_t: r_{(1)} \leq r_{(2)} \leq \ldots \leq r_{(T)}$ 
\State \Comment{\texttt{Notation:} $v_{(i)}$: corresponding $v$ of $r_{(i)}$}
\For{\texttt{$i=1,2,\ldots,T+1$}}
    \If{$i=1$}
    \State $\widehat{x} = r_{(1)} - 1$
    \State $ m \gets \sum_{t=1}^T \bid(u_t + v_t\widehat{x}>0)$
    \State $ s \gets \sum_{t=1}^T \bid(u_t + v_t\widehat{x}>0)$
    \ElsIf{$i=T+1$}
    \State $s\gets s + \text{sign}(v_{(i)})$ 
    \If{$s>m$}
    \State $\widehat{x} = r_{(T)} + 1$
    \State $m\gets s$
    \EndIf
    \Else
        \State $s\gets s + \text{sign}(v_{(i)})$
    \If{$s>m$}
    \State $\widehat{x} = (r_{(i-1)} + r_{(i)}) /2$
    \State $m\gets s$
    \EndIf
    \EndIf
\EndFor
\end{algorithmic}
\end{algorithm}

We use an alternating maximization scheme to solve~(\ref{eq:opt}). We go through the $p\times q$ elements of $\bB$ one-by-one and update them to maximize $S_n(\bB)$ while the other elements are kept fixed. We show that each of these optimization problems are of the form~(\ref{eq:opt_simp}) and can be solved easily. Assume that we want to maximize $S_n(\bB)$ while all elements except $B_{rs}$ are fixed. Remember that:
\begin{equation*}\label{eq:opt_mod}
S_n(\bB) = \frac{1}{n\binom{q}{2}} \sum_{i=1}^n \sum_{j=1}^q \sum_{k=1}^q \bid(y_{ij}\!\!>\!\!y_{ik})\bid(\bx_i^T\bb_j\!\!>\!\!\bx_i^T\bb_k) 
\end{equation*}
where $\bb_j$ and $\bb_k$, respectively, denote the $j$-th and $k$-th columns of $\bB$. $B_{rs}$ appears in the sum when either $j=s$ or $k=s$. Simple calculations show that:
\begin{align*}
S_n(\bB) = \frac{1}{n\binom{q}{2}} \sum_{i=1}^n \sum_{j=1}^q \Big( &\bid(y_{ij}\!\!>\!\!y_{is})\bid(\bx_i^T\bb_j\!\!>\!\!\bx_i^T\bb_s) \\ &+\bid(y_{ij}\!\!<\!\!y_{is})\bid(\bx_i^T\bb_j\!\!<\!\!\bx_i^T\bb_s) \Big) + c,
\end{align*}
where $c$ denotes the sum of all terms that do not depend on $B_{rs}$. It is now easy to see that maximizing $S_n(\bB)$ with respect to $B_{rs}$ is an instance of~(\ref{eq:opt_simp}); $c$ disappears in the optimization, and depending on whether $y_{ij}>y_{is}$ or $y_{ij}<y_{is}$, only one of the  terms in each set of parentheses remains. In maximizing $S_n(\bB)$ with respect to $B_{rs}$, we have $T=nq$; if $y_{ij}>y_{is}$, $u_t=\bx_i^T(\bb_j-\bb_s)+x_{ir}B_{rs}$, and $v_t = -x_{rs}$; and if $y_{ij}<y_{is}$, $u_t=\bx_i^T(\bb_s-\bb_i)-x_{ir}B_{rs}$, and $v_t = x_{rs}$. Computing each $u_t$ and $v_t$ requires $O(p)$ operations, and thus computing all $u_t$ and $v_t$ for $t=1,\ldots,nq$ requires $O(nqp)$ operations. In total, maximizing $S_n(\bB)$ with respect to one of the elements of $\bB$ requires $O(nq(p+\log(nq))$ operations; $O(nqp)$ to compute the coefficients needed for formulating~(\ref{eq:opt_simp}) and $O(nq\log(nq))$ to solve it. 

In each full round of the alternating maximization procedure, we go over all $pq$ elements of $\bB$. So, a full round requires $O(npq^2(p+\log(nq))$ operations. After each round we adjust $\bB$ (by subtracting its last column from all its columns and then normalizing it) so that $\bB\in\mathcal{B}$. As argued before, this adjustment does not change the objective function. We continue this alternating maximization, until a point when a full round over all elements does not increase $S_n(\bB)$---an indication of reaching a fixed point. Since $S_n(\bB)$ is the sum of $O(nq^2)$ step functions,  it takes $O(nq^2)$ different values. Therefore, our algorithm stops at most in $O(nq^2)$ steps (since $S_n$ is increased in each round). Putting all these together, we conclude that our whole algorithm requires $O(n^2pq^4(p+\log(nq))$ operations, which is polynomial in all three parameters. The pseudo-code of the algorithm is presented in Algorithm 2. 

Our algorithm is greedy and it may converge to a local maximum. We can alleviate this problem to some degree by starting the algorithm from different random initial points and choosing the best result. In the next section, we show that our proposed alternating maximization scheme is successful in providing very good estimates of the true coefficient matrix. 
\begin{algorithm}[!h]
\caption*{ \bf Algorithm 2 --- for solving~(\ref{eq:opt}) }
\begin{algorithmic}
\State {\bf Inputs}: $\bx_i, \by_i, i=1,\ldots,n$ ;  {\bf Output}: $\widehat{\bB}_n$
\State \texttt{Random Initialization:} $\bB \gets$ randn$(p,q)$
\While{$S_n(\bB)$ changes}
\For{$r=1,\ldots,p; s=1,\ldots,q$}
\State \texttt{Maximize $S_n(\bB)$ w.r.t. $B_{rs}$ with}
\State \texttt{other elements fixed using Alg.~1}
\EndFor
\State $\bb \gets \bB(:,q)$ \Comment{$\bb$ is the last column of $\bB$}
\State $\bB \gets (\bB-\bb\mathds{1}_{1\times q})/\|\bB-\bb\mathds{1}_{1\times q}\|_F$
\EndWhile

\State $\widehat{\bB}_n \gets \bB$
\end{algorithmic}
\end{algorithm}

\subsection{Imposing Sparsity}\label{sec:L0}
So far, we formulated the problem of finding $\bB$ with no extra structural constraint. In many settings (e.g., high-dimensional, high noise, limited data), solving these problems with no extra constraints results in poor predictive performance, often due to overfitting. Here, we explain how we can impose sparsity on the coefficient matrix in our alternating maximization scheme. The $\ell_0$ norm is the precise metric to measure sparsity, but is often relaxed to the $\ell_1$ norm  to make optimization problems convex. With our proposed formulation, it is possible to use the $\ell_0$ norm directly. Let us revisit the optimization problem in~(\ref{eq:opt_simp}), but this time with  an additional $\ell_0$ penalty on $x$. We have
\begin{equation}\label{eq:opt_spa}
\widehat{x} =  \argmax_{x \in \mathbb{R}}\qquad  \underbrace{ \sum_{t=1}^T \bid(u_t + v_tx>0) - \lambda \|x\|_0}_{f(x)},
\end{equation}
where $\lambda$ is the regularization parameter and $\|x\|_0 = 0$ if $x=0$ and $\|x\|_0 = 1$ if $x\neq0$. Solving this problem is very similar to solving the unconstrained problem in~(\ref{eq:opt_simp}). We only need to solve the problem without the constraint (say we get the solution $\widehat{x}$) and then compare the value of $f(\widehat{x})$ with $f(0)$. If $f(\widehat{x})>f(0)$, then we choose $\widehat{x}$ as the solution of the constrained problem and if $f(\widehat{x})\leq f(0)$, we choose $0$ as the solution. Thus, the solution of the constrained problem can be achieved with $O(1)$ extra computations compared to the unconstrained case. Therefore, we can impose element-wise sparsity on $\bB$ (i.e., having the penalty term $\lambda \sum_{i,j}\|B_{ij}\|_0$) by solving~(\ref{eq:opt_spa}) instead of~(\ref{eq:opt_simp}) in each step of the alternating maximization. 
\section{Simulation Study}\label{sec:sim1}
\subsection{Consistency of Optimizer}
In Section~\ref{sec:theory}, we proved that the global maximizer of~$S_n(\bB)$ is a consistent estimator of the coefficient matrix. However, $S_n(\bB)$ is a highly non-smooth objective function, and it is difficult to find its global maximum in general. Our proposed greedy algorithm in Section~\ref{sec:opt} is only guaranteed to reach a local maximum. In this section, by extensive simulations, we show that our alternating maximization is able to provide estimates very close to the true matrix for large values of $n$ which is in conformity with consistency. 

We use the model in~(\ref{eq:matmodel}) with various utility functions and noise distributions to generate the synthetic data. For fixed $p$ and $q$ (for the simulations in this section, we set $p=q=5$), we change $n$ from $2^3$ to $2^{17}$ in powers of two, and test the consistency of our alternating maximization scheme, i.e., how close the estimated coefficient matrix, $\widetilde{\bB}_n$, gets to the true one in the model, $\bB^*$.  To measure the similarity of $\widetilde{\bB}_n$ and $\bB^*$, we use the following two measures:
\begin{align*}
M_1(\widetilde{\bB}_n, \bB^*) & = \|\widetilde{\bB}_n - \bB^*\|_F^2 = \sum_{i,j} (\widetilde{B}_{ij} - B^*_{ij})^2, \\
M_2(\widetilde{\bB}_n, \bB^*) & = \frac{\sum_{i=1}^p\sum_{j=1}^q (\widetilde{B}_{ij} - \widetilde{b})(B^*_{ij} - b^*)}{\sqrt{\sum_{i,j} (\widetilde{B}_{ij} - \widetilde{b})^2}\sqrt{\sum_{i,j} (B^*_{ij} - b^*)^2}},
\end{align*}
where $\widetilde{b} = \sum_{i,j}\widetilde{B}_{ij}/pq \text{ and } b^* = \sum_{i,j} B^*_{ij}/pq$. Thus, $M_1$ measures how close the elements of the estimated coefficient matrix are to the true elements, and $M_2$ measures how correlated the elements of the two matrices are. 

We generate the data as follows. First, we generate an $n\times p$ predictor matrix, $\bX$, with  rows independently drawn from $\mathcal{N}({\bf 0}, \bsig_X)$, where the $(i,j)$-th element of $\bsig_X$ is defined as $\sigma^X_{i,j} = 0.7^{|j-i|}$. This is a common model for predictors in the  literature~\citep{Yua07, pen10,Roth10}. The true coefficient matrix, $\bB^*$, has $75\%$ non-zero elements which are drawn independently from a normal distribution $\mathcal{N}(0,1)$. To satisfy condition {\bf (C4)}, we then subtract its last column from all columns and normalize it so that $\bB^*\in\mathcal{B}$.

We consider three types of noise:
\begin{itemize}
\item[{\bf (E1)}]The elements of $\bE$ are drawn independently from a Gaussian distribution $\mathcal{N}(0,1)$.
\item[{\bf (E2)}] The elements of $\bE$ are drawn independently from a $t$-student distribution with $\nu=1$.
\item[{\bf (E3)}] The elements of $\bE$ are drawn independently according to a Gaussian mixture model; each element is drawn from a $\mathcal{N}(0,.2)$ with probability $.8$ and from a $\mathcal{N}(1,.2)$ with probability $.2$. 
\end{itemize}
After sampling, to have a consistent signal-to-noise ratio in various scenarios, the elements of $\bE$ are scaled accordingly to satisfy $\|\bE\|_F/\|\bX\bB^*\|_F=0.2$; i.e., the norm of the noise matrix is $20\%$ of the norm of the signal matrix. {\bf E1} corresponds to a general setting where the noise is Gaussian; {\bf E2} simulates a heavy-tailed noise which is present in many practical settings; and {\bf E3} simulates a case where $20\%$ of the data points are corrupted with larger noise (i.e., simulating outliers). 

We also consider three different utility functions:
\begin{itemize}
\item[{\bf (U1)}] Identity: $U(x) = x$.
\item[{\bf (U2)}] Sigmoid: $U(x) = 1/(1+e^{-x/5})$.
\item[{\bf (U3)}] Piecewise constant: $U(x) = \left \lfloor{x}\right \rfloor $.
\end{itemize}
The identity utility function transforms the problem in~(\ref{eq:matmodel}) to a regular multivariate regression problem. The sigmoid utility function has the nice property of diminishing marginal returns and is of special interest in many applications including economics~\citep{Fri48,Tve92} and network utility maximization~\citep{Faz05}. Piecewise constant functions are useful in choice models~\citep[Chapter 17]{Gre11} and also can be used in modeling the settings where the responses are ordinal, e.g., surveys asking subjects' levels of preference (say $1$--$10$ corresponding to ``very poor'' to ``excellent'') for a set of responses. 

Finally, we set $\bY = U(\bX\bB^*+\bE)$, where $U$ is applied individually to the elements of its input. For any given $\bX$ and $\bY$, we run Algorithm~2 described in Section~\ref{sec:opt} with 10 different starting points and choose the result with the highest objective value. The starting points are chosen by drawing elements independently from a standard normal distribution. For each $n$, we generate ten sets of $\bX$ and $\bY$ as described above and report the median of each of the two performance measures over these ten sets (mean results are similar). The results are shown in Figure~1.

As $n$ becomes larger, the sum-of-squares error, $M_1$, goes to 0 and the correlation, $M_2$, goes to 1. Therefore, in these cases, our greedy algorithm provides estimates with increasing similarity to the true matrix which is in agreement with consistency. We are increasing $n$ to very large values to show this consistency; however, the algorithm provides very good solutions for all practical purposes at smaller values of $n$. For instance, for {\bf (U2)} and $n=128$, the averages of $M_1$ and $M_2$ medians for all three noise settings are respectively $0.052$ and $0.95$. Since $\|\bB^*\|_F^2=1$, this means that the sum of all squared errors is only $5.2\%$ of the sum of the squares of the true elements, and that the estimated and true coefficients are highly correlated.

\begin{figure*}[!b]
     \begin{center}
        \subfigure{%
            \includegraphics[scale=0.19]{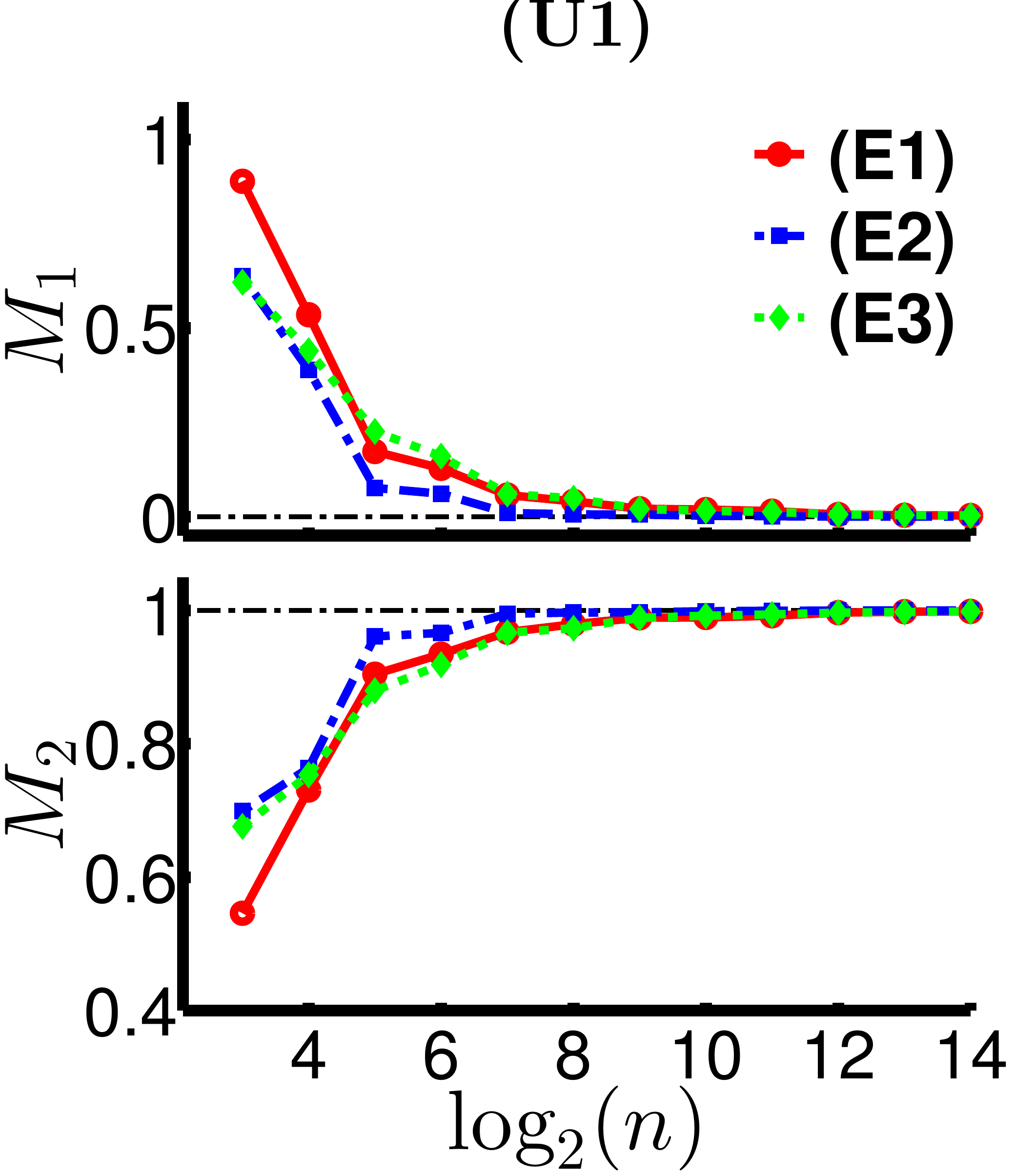}
        }%
        \subfigure{%
           \includegraphics[scale=0.19]{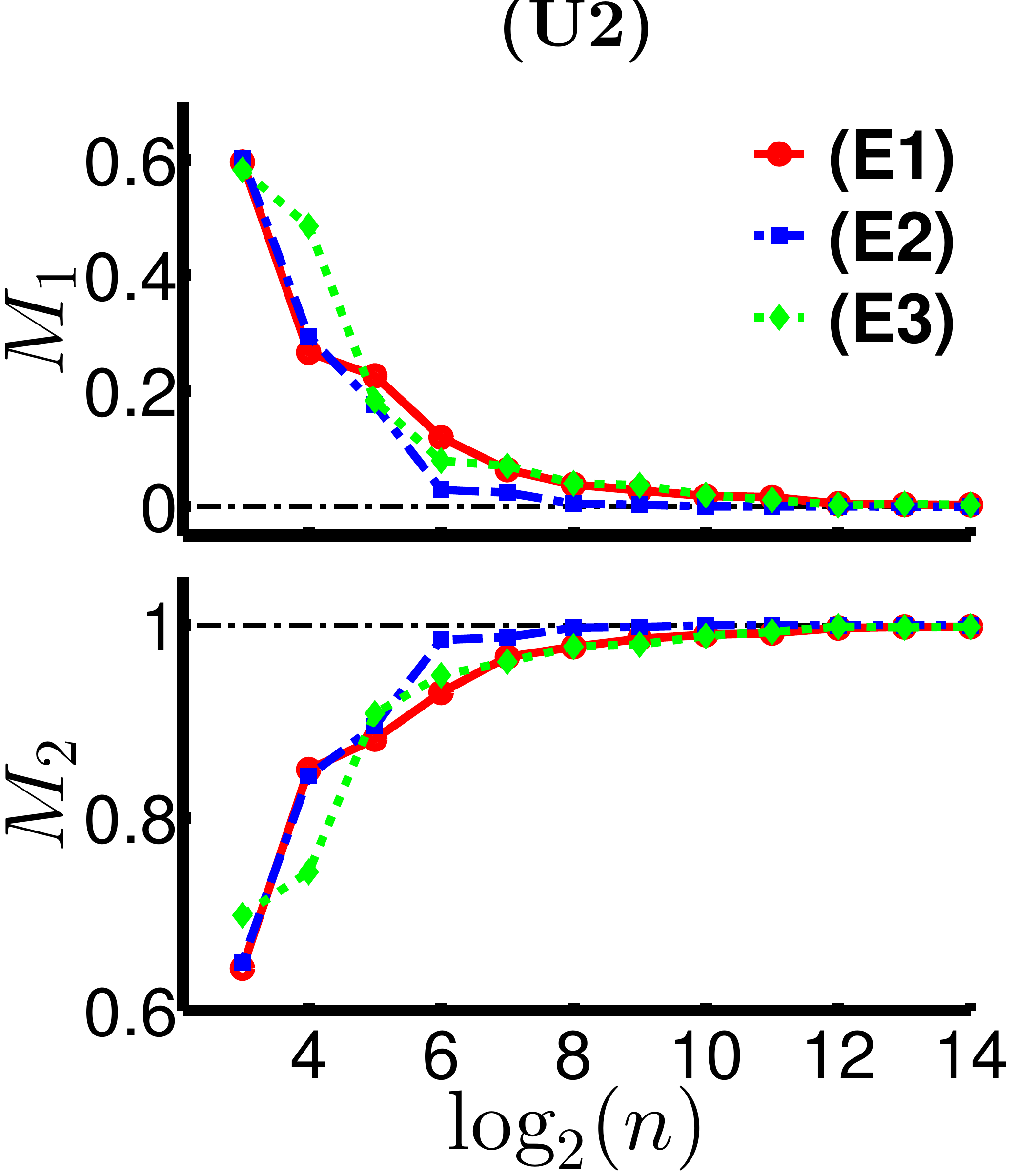}
        } 
\subfigure{%
           \includegraphics[scale=0.19]{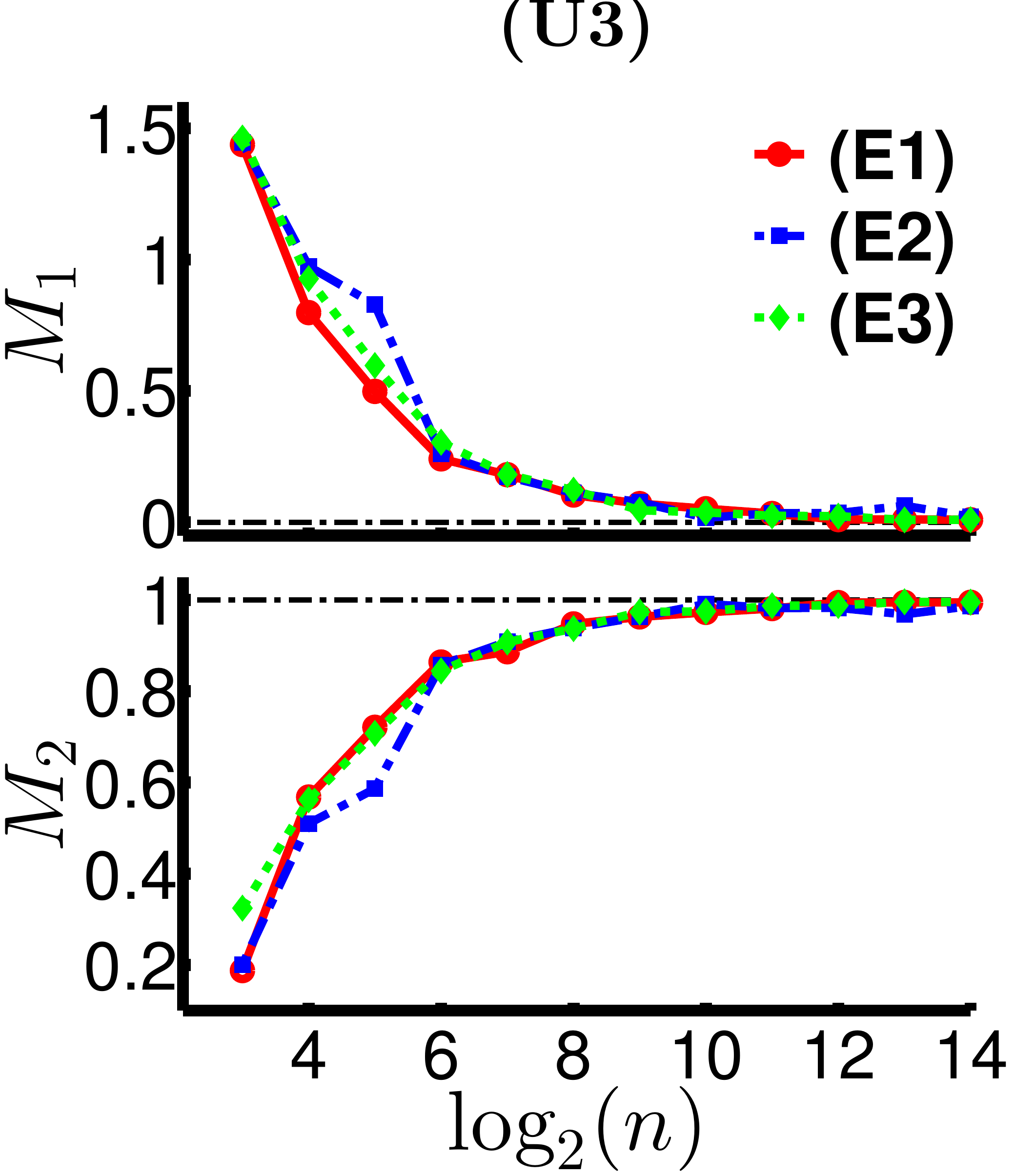}
        } 
    \end{center}
    \caption{Examining the consistency of the alternating maximization scheme for different noise distributions and utility functions. For both similarity metrics, medians over 10 runs are reported.}
   \label{fig:pm}
\end{figure*}

\subsection{Variable Selection}
The results in Figure~1 show that Algorithm~2 provides consistent estimates of $\bB^*$ in different settings; if an element of $\bB^*$ is zero, then the corresponding estimated element becomes very close to 0 for large values of $n$. However, in the absence of extra constraints, these estimates may not be exactly zero. Thus, to identify the relevant variables in a model, one has to use ad-hoc thresholding techniques, which is not desirable. The penalized formulation described in Section~\ref{sec:L0}, imposes sparsity on the elements of the estimated matrix, resulting in joint variable selection and estimation. In Table~1, we compare the variable selection and estimation of solutions of the penalized version of~(\ref{eq:opt}) with those of the unconstrained problem, for $n=128$ and the sigmoid utility function. Data is generated as before except $\bB^*$ has $50\%$ non-zero elements. For the penalized version, we set $\lambda=5$. We measure the variable selection performance by the following two metrics:
\vspace*{-0.02 in}
\begin{align*}
\textrm{Signed  Sensitivity}  & = \frac{\sum_{i,j}{\bf 1[} B^*_{i,j} \cdot \widehat{B}_{i,j} > 0{\bf ]}}{\sum_{i,j} {\bf 1}[B^*_{i,j}\neq 0]}, \\
\textrm{Specificity}  & = \frac{\sum_{i,j}{\bf 1[} B^*_{i,j} = 0{\bf ]}\cdot {\bf 1[} \widehat{B}_{i,j} = 0{\bf ]}}{\sum_{i,j} {\bf 1}[B^*_{i,j}= 0]}.
\end{align*}
\vspace*{-0.12 in}

We observe that although both versions of the algorithm produce solutions almost equally close to the true matrix (in terms of $M_1$), the penalized version provides  much better variable selection. In the original model, most of the elements are non-zero (some of them very close to zero), resulting in a poor specificity. The solutions of the penalized version have both high sensitivity and specificity.

\begin{table}[!h]
\centering
\begin{tabular}{|c|c||c|c|c|}
\hline 
 Problem & Metric & {\bf E1} & {\bf E2} & {\bf E3} \\ 
\hline 
\hline
\multirow{3}{*}{Original} & $M_1$ & 0.059  &  0.013 & 0.059  \\ 
\cline{2-5} 
 & Sensitivity & 1 & 1 & 1 \\ 
\cline{2-5}
 & Specificity & 0.36  &  0.37 &   0.34 \\ 
\hline \hline
\multirow{3}{*}{Penalized} & $M_1$ &  0.048  &  0.016   & 0.050 \\ 
\cline{2-5} 
 & Sensitivity & 0.89  &  0.92  &  0.87 \\ 
\cline{2-5}
 & Specificity &  0.71  &  0.87 &   0.78 \\ 
\hline 
\end{tabular} 
\caption{Variable selection comparison. For all metrics, medians over 20 runs are reported.}
\end{table}

\subsection{Comparison with Other Multivariate Regressions Algorithms}
Compared with traditional linear multivariate regression algorithms (such as LASSO), our method has two important differences. First, the objective function is defined as the Kendall's rank correlation between responses and their estimates rather than the sum-of-squares error (with some regularization). The second difference is in the assumed underlying model. We consider a model where responses are related to the predictors through an unknown, potentially non-linear function, $U$, whereas in the traditional techniques, the relationship is assumed to be linear (or in the generalized regression setting, a known function, e.g., $\log$, of inputs). 

When the underlying problem structure is similar to~(\ref{eq:matmodel}), and the objective is the rank correlation, it is expected that our algorithm performs better than traditional models, since it matches the structure and objective better. To verify this expected result, we perform an experiment on simulated data and compare our algorithm with some the state-of-the-art multivariate regression techniques. 

The setup is as follows. We assume a model as in~(\ref{eq:matmodel}) with 10 predictors and 10 responses. The elements of the predictor matrix are sampled from a uniform distribution between 0 and 100. Matrix $\bB$ is sparse with density $60\%$ and its elements are drawn from a standard uniform distribution. The utility function is a sigmoid: $U(x) = 1/(1+e^{-x})$. We consider three noise settings as in {\bf (E1)--(E3)}. The learning is done over 30 instances and the test is done on a separate set of 20 instances. For all algorithms, the regularization parameters are achieved via 5--fold cross-validation. The algorithms against which we compare our method are Least Squares (LS), a robust version of LS called Least Trimmed Squares (LTS)~\citep{vic00}, LASSO~\citep{Tib11}, Sparse Reduced Rank Regression (SRRR)~\citep{Chen12}, elastic net~\citep{Zou05enet}, and regressions with ridge regularization. 

We run each experiment 100 times and report the median and $95\%$ confidence intervals of the improvements in the test rank correlation in Table~\ref{tab:mvcomp}. In essence, we run a paired hypothesis test comparing our algorithm against each of the algorithms in Table~\ref{tab:mvcomp}, and report the median, $2.5$'th percentile, and $97.5$'th percentile of the test statistic, $c_1 - c_2$, where $c_1$ and $c_2$ are respectively the test rank correlations of our algorithm and the test rank correlation of the other algorithm. We observe that in all cases, our algorithm performs statistically significantly better than other algorithms.

\begin{table}[!h]
\hspace*{-0.1 in}
\centering
\begin{tabular}{|c|c|c|c|c|c|c|c|}
\hline 
 \multicolumn{2}{|c|}{Improvement over} & LS &  LTS & LASSO & SRRR & ElasticNet &  Ridge \\ 
\hline \hline
\multirow{ 2}{*}{E1} & median  & 0.11  & 0.18  & 0.09  & 0.57  & 0.52    & 0.50\\ 
\cline{2-8} 
  &  $95\%$ CI & [0.02, 0.30]  & [0.09, 0.51]  &  [0.01, 0.26] & [0.16, 1.2]  &  [0.27, 0.86] & [0.24, 0.93]   \\ 
\hline \hline
\multirow{ 2}{*}{E2}  &  median &  0.11 & 0.16  & 0.09 & 0.62 & 0.54  &  0.51    \\ 
\cline{2-8} 
  &  $95\%$ CI & [0.05, 0.41]  & [0.07, 0.37]  & [0.04, 0.34]  & [0.24, 1.2]  &  [0.27, 0.90] &  [0.24,0.96] \\ 
\hline \hline
 \multirow{ 2}{*}{E3} & median  & 0.11  &  0.17 & 0.08&    0.59 & 0.48 &   0.50   \\ 
\cline{2-8} 
  &  $95\%$ CI &  [0.01, 0.32] & [0.07, 0.44]  & [0.00, 0.29] & [0.21, 1.22]  & [0.23, 0.93]  &   [0.19, 0.88] \\ 
\hline 
\end{tabular} 
\caption{Improvements achieved over state-of-the-art multivariate regression algorithms by using our proposed rank-based semi-parametric multivariate regression. The objective is the rank correlation between estimates and true values of responses. We report the statistics for $c_1 - c_2$, where $c_1$ and $c_2$ are respectively the test rank correlations of our algorithm and the test rank correlation of the other algorithm. \label{tab:mvcomp}}
\end{table}

\section{Application to Real Data} \label{sec:real}
In this section, we study two problems with real data where the training set is in the form of an ordering and/or the objective is to order a set of items. 
\subsection{Sushi Dataset}
First, we consider the sushi preference dataset\footnote{http://www.kamishima.net/sushi/}. This dataset includes the preference ordering of 10 sushi types\footnote{Ebi, Anago, Maguro, Ika, Uni, Sake, Tamago, Toro, Tekka--maki, and Kappa--maki} by 5000 users and the demographic information about these users. For each user we keep the following features: gender (male/female), age group (15--19, 20--29, 30--39, 40--49, 50--59, 60$+$), region in which the user had lived for the longest period until 15 years old (11 in total), and the region in which the user currently lives (11 in total). We represent each feature with a binary indicator vector. For example, for the gender, we use $(0,1)$ for males and $(1,0)$ for females. We show different age groups by $(1,0,0,0,0)$, $(0,1,0,0,0,0)$, $\ldots$, $(0,0,0,0,1)$. Similarly, the regions are represented by binary vectors of size 11. Thus, in total, each user has a feature vector of size 30 representing his/her demographic information.

The goal of our prediction task is to estimate the ordering of the 10 sushi types for a new user only based on his/her demographic information. Note that neither collaborative filtering nor content--based filtering is applicable to this problem, since first, the new user may not have rated or ranked any sushi types beforehand and second, we do not include extra domain knowledge about each sushi type. On the other hand, our regression--based framework is suitable for this prediction task. From the 5000 users, we choose 2500 of them at random as the training set and keep the the rest as the test set. We take the average Kendall correlation between the rows of predicted and true orderings for the users in the test set as the performance metric. We repeat this random division of users into training and testing groups 100 times to achieve bootstrapped confidence intervals for the performance metric. 

In order to compare our algorithm to other regression--based algorithms, we need to transform the ordering into ratings. We use the technique described in~\citep{Kam10} and assign the ratings $1/11$, $2/11$, $3/11, \ldots, 10/11$ to the least preferred to most preferred items. Then, for the algorithms that performed well in the simulation study (see Table~\ref{tab:mvcomp}), we follow the same training and testing procedure as explained above. We also compare the results to a $K$ nearest neighbor technique (KNN) where the feature vector of a new user is compared with the available users to identify the $K$ most similar users (in terms of the Euclidean distance), and then its ratings are calculated by averaging the ratings of those $K$ neighbors. The parameters of all these models are found via 5--fold cross--validation. 
The results are shown in Table~\ref{tab:sushi}. As we observe, our algorithm outperforms other with high statistical significance. 
\begin{table}[!h]
\centering
\begin{tabular}{|c|c|c|c|c|c|c|}
\hline 
  & Order-based  &   LASSO &  LS& SRRR & KNN \\ 
\hline 
 Median & $0.34$  &     $0.31$ & $0.31$ & $0.18$ &  $0.31$\\ 
\hline 
 $95\%$ CI &  $[0.33, 0.35]$  &  $[0.30, 0.32]$ &  $[0.30, 0.32]$ & $[0.02, 0.27]$ & $[0.31, 0.32]$ \\ 
\hline 
\end{tabular} 
\caption{Comparison of median and $95\%$ confidence intervals of the performance metric (average Kendall correlations between rows of estimated and true responses).  \label{tab:sushi} }
\end{table}

\subsection{BIXI Dataset}
We use a dataset providing information about Montreal's bicycle sharing system called BIXI. The data contains the number of available bikes in each of the 400 installed stations for every minute. We use the data collected for the first three weeks of June 2012. From this dataset we first form the features as follows. We allocate two features to each station corresponding to the number of arrivals and departures of bikes to or from that station for every hour. We define two learning tasks: using the number of arrivals and departures in the last hour, estimate the ordering of the number of arrivals and the ordering of the number of departures in the next hour. The result of this estimation is useful in identifying the stations with the highest incoming or outgoing traffic; the BIXI management team can then provide these stations with more bikes or remove the extra bikes accordingly.  

Mathematically, we want to estimate $\bB$ such that the ordering of elements in the rows of $\bY$ and $\bX\bB$ are as similar as possible, where  $\bX_{t,j}$ and $\bX_{t,400+j}$ respectively show the number of arrivals and departures in hour $t$ at station $j$ and $\bY_{t,j}$ either show the number of arrivals or departures in hour $t+1$ at station $j$.

From the roughly 500 hourly data points during the first three weeks of June, we use the first 300 data points for training and use the other 200 points for testing. We compare our algorithm to the multivariate regression algorithms that performed well in Table~\ref{tab:mvcomp} (LS and LTS cannot be used since we have more features than instances and are replaced by ridge regression as the baseline) and a KNN algorithm.  All the parameters of the algorithms are found via 5--fold cross--validation. In Table~\ref{tab:bixi}, we compare the average Kendall correlations between corresponding rows of estimated and true response matrices. We observe that our algorithm provides a better estimate (by around $10\%$ to $20\%$) of the ordering of the stations in terms of the number of arrivals and departures in the next hour.

\begin{table}[!h]
\centering
\begin{tabular}{|c|c|c|c|c|c|}
\hline 
  & Order-based  &   LASSO & SRRR   &  Ridge & KNN \\ 
\hline 
 Arrivals & $0.39$  &     $0.33$ & $0.32$ &  $0.31$ & $0.36$\\ 
\hline 
 Departures &  $0.40$  &  $0.33$ & $0.33$ &  $0.29$ & $0.35$ \\ 
\hline 
\end{tabular} 
\caption{Comparison of Kendall correlations between rows of estimated and true responses.  \label{tab:bixi} }
\end{table}

\section{Conclusions}
\vspace*{-0.08 in}
In this paper, we considered a generalized regression problem where the responses are monotonic functions of a linear transformation of the predictors. We proposed a semi-parametric method based on rank correlation, which is invariant with respect to the functional form of the underlying monotonic function, to estimate the linear transformation. We showed that the solution to our formulated problem is a consistent estimator of the true matrix and identified the convergence rate. To find the solution, we need to maximize a highly non-smooth function. We proposed a greedy algorithm to solve that problem, and showed its success in estimating the true coefficient matrix through simulations over a variety of noise distributions and utility functions. Finally, we presented a penalized version of our problem which has the same computational complexity as the original problem, but results in  better variable selection and more interpretable models.

\appendix
\section{Proof of Lemma 3}

\begin{mydef}
Assume that $\bz$ is a random vector with a distribution over $\mathbb{R}^d$. For a given $\bth$, we say $\mathcal{F}=\{f(\bz,\bthe), \bthe\in\bth\}$  is a $P$--degenerate class of functions over $\mathbb{R}^d$ if $E_{\bz}\left[f(\bz,\bthe)\right]=0$ for all $\bthe\in\bth$.
\end{mydef}

\begin{mydef}[Definition (2.7) from \citep{Pakes}]
A class of functions, $\mathcal{F}$, is called Euclidean for envelope $F$ if there exist constants $A$ and $V$ such that we have: if $0<\epsilon\leq 1$ and if $\mu$ is a measure such that $\int Fd\mu<\infty$, then there are functions $f_1,f_2,\ldots,f_k$ in $\mathcal{F}$ such that (i) $k\leq A\epsilon^{-V}$ and (ii) $\mathcal{F}$ is covered by the union of closed balls with radius $\epsilon \int Fd\mu$ and centers $f_1,\ldots,f_k$. In other words, for each $f\in\mathcal{F}$, there is an $f_i$ with $\int|f-f_i|d\mu\leq\epsilon\int Fd\mu$.  $A$ and $V$ must not depend on $\mu$.
\end{mydef}

\begin{mylemma}[Example (2.11) from \citep{Pakes}]\label{lem:pw} The class of piece-wise constant functions that are bounded by a fixed function $F$ is Euclidean with envelope $F$. 
\end{mylemma}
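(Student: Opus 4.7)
My plan is to reduce the lemma to the well-known fact that VC-type classes of indicator functions have polynomial uniform covering numbers, and then lift to general piecewise constant functions via the Pakes--Pollard permanence theorems. Informally, any piecewise constant $f$ on $\mathbb{R}^d$ with $|f|\le F$ can be written as $f=\sum_{k=1}^{m_f} c_k\,\bid_{A_k}$ with $|c_k|\le F$ on $A_k$, so the whole class is encoded by (i) the family of partition boundaries and (ii) the bounded weights $c_k$.

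First, I would make precise the structural hypothesis hidden in the statement: the boundaries of the ``pieces'' must come from a fixed finite-dimensional family of sets (half-spaces or polynomial sublevel sets). In the way this lemma is used in the present paper, the pieces arise from indicators of linear inequalities in the finite-dimensional parameter $\bthe$, so the boundary family is the prototypical VC class of affine half-spaces in $\mathbb{R}^p$, with VC dimension $p+1$.

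Second, I would invoke Pollard's (1984) classical covering-number bound: for a VC class $\mathcal{C}$ with finite VC dimension $V$, one has the uniform estimate $N(\epsilon,\{\bid_A:A\in\mathcal{C}\},L^1(\mu))\le A\epsilon^{-V}$ for every probability measure $\mu$, with constants $A,V$ independent of $\mu$. This is exactly the Euclidean property with envelope $1$. Then I would lift from indicators to piecewise constant functions using the permanence properties of Euclidean classes: the Euclidean property is closed under finite sums (with polynomial blow-up in the constants) and under multiplication by a uniformly bounded function, in which case the envelope rescales by the multiplier bound. Writing $f=\sum_k c_k\,\bid_{A_k}$ and using $|c_k|\le F$ pointwise then produces, for any $\mu$ with $\int F\,d\mu<\infty$, an approximation $\int |f-f_i|\,d\mu\le \epsilon\int F\,d\mu$ with $k\le A'\epsilon^{-V'}$ centers, which is the definition of Euclidean with envelope $F$.

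The main obstacle is precisely the first step: making the informal ``piecewise constant'' hypothesis explicit enough that polynomial VC-type bounds apply. The statement in the excerpt absorbs this into a reference to Pakes's Example~(2.11), but in isolation it is almost vacuously false, since arbitrary piecewise constant functions bounded by $F$ form a far too rich class. Once one identifies the underlying polynomial-discrimination family for the partition boundaries (as in the paper's usage, where the pieces are cut out by a bounded number of linear inequalities in $\bthe$), the rest of the argument is a routine assembly of Pollard's VC bound with the Pakes--Pollard permanence theorems and carries no further technical difficulty.
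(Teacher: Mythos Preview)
The paper does not supply its own proof of this lemma; it simply records it as Example~(2.11) from Pakes--Pollard and uses it as a black box in the proof of Lemma~3. There is therefore nothing in the paper to compare your argument against.

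That said, your sketch is the standard route and is essentially what underlies the cited example. Your most important observation is the one you flag as an obstacle: the phrase ``piecewise constant'' in the stated lemma cannot mean \emph{arbitrary} piecewise constant functions, or the class would be far too rich to be Euclidean. In Pakes--Pollard the implicit hypothesis is that the pieces are cut out by a fixed polynomial-discrimination (VC-type) family, and in the paper's application this is exactly what happens: each $h(\bz,\bthe)$ is a finite sum of products of indicators of affine half-spaces in $\bthe$, so the partition boundaries come from the prototypical VC class. Once that structural assumption is made explicit, your two-step plan---Pollard's uniform covering bound for the indicator class, followed by the Pakes--Pollard permanence theorems for sums and bounded rescalings to pass to envelope $F$---is correct and complete. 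The paper simply takes all of this for granted by citing the reference.
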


\begin{mylemma}[Corollaries 17 and 21 from \citep{Nolan}] \label{lem:corol}
~\\ (i) If $\mathcal{F}$ is Euclidean for envelope $F$ and $\mathcal{G}$ is Euclidean for envelope $G$, then $\mathcal{F}+\mathcal{G}$ is Euclidean with envelope $F+G$. \\ (ii) If $\mathcal{F}$ is a uniformly bounded class of functions, then for each finite measure $\nu$, the class $\nu\mathcal{F}$ is Euclidean. 
\end{mylemma}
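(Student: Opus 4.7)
The plan is to handle the two parts separately, applying the covering-number definition of ``Euclidean'' directly and exploiting linearity in each case. Both arguments are essentially constructive: given covers promised by the hypotheses, I build covers for the new class and then bound the resulting cardinalities.

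For part (i), I would use a product-cover argument. Fix $\epsilon \in (0,1]$ and a measure $\mu$ with $\int (F+G) d\mu < \infty$. Since $F,G \geq 0$ are envelopes, $\int F d\mu$ and $\int G d\mu$ are both finite and bounded above by $\int (F+G) d\mu$. Applying the Euclidean property of $\mathcal{F}$ at tolerance $\epsilon/2$ yields centres $f_1,\ldots,f_m \in \mathcal{F}$ with $m \leq A_{\mathcal{F}}(\epsilon/2)^{-V_{\mathcal{F}}}$ such that every $f \in \mathcal{F}$ lies within $(\epsilon/2) \int F d\mu$ of some $f_i$ in $L^1(\mu)$, and likewise centres $g_1,\ldots,g_n \in \mathcal{G}$ at tolerance $\epsilon/2$. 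For any $f+g \in \mathcal{F}+\mathcal{G}$, pick the corresponding $f_i$ and $g_j$; by the triangle inequality $\int |(f+g) - (f_i + g_j)| d\mu \leq (\epsilon/2) \int F d\mu + (\epsilon/2) \int G d\mu \leq \epsilon \int (F+G) d\mu$. The $mn$ sums $\{f_i + g_j\}$ lie in $\mathcal{F}+\mathcal{G}$, and $mn \leq A_{\mathcal{F}} A_{\mathcal{G}} 2^{V_{\mathcal{F}}+V_{\mathcal{G}}} \epsilon^{-(V_{\mathcal{F}}+V_{\mathcal{G}})}$, giving new uniform constants that depend only on those of $\mathcal{F}$ and $\mathcal{G}$.

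For part (ii), interpreting $\nu \mathcal{F}$ in the standard U-statistic sense as the class of partial integrals $\{x \mapsto \int f(x,z) d\nu(z) : f \in \mathcal{F}\}$, I would exploit the linearity of $f \mapsto \nu f$ together with Fubini. Uniform boundedness of $\mathcal{F}$ by some constant $M$ makes $M$ a constant envelope for $\mathcal{F}$, so $M \nu(1)$ is a constant envelope for $\nu \mathcal{F}$. For any other finite measure $\mu$, Fubini gives $\int |\nu f - \nu g| d\mu \leq \int |f-g| d(\nu \otimes \mu)$, so any $L^1(\nu \otimes \mu)$ cover of $\mathcal{F}$ transports to an $L^1(\mu)$ cover of $\nu \mathcal{F}$ with the same centres $\{\nu f_i\}$ and the same cardinality. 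Applying the Euclidean property of $\mathcal{F}$ with envelope $M$ under the product measure $\nu \otimes \mu$ produces at most $A \epsilon^{-V}$ centres realizing radius $\epsilon M (\nu \otimes \mu)(\mathbf{1}) = \epsilon M \nu(1) \mu(1)$, which after absorbing $\nu(1)$ and rescaling $\epsilon$ matches the required radius relative to the envelope $M \nu(1)$ in $L^1(\mu)$.

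The main obstacle will be the preliminary fact underpinning part (ii): that a uniformly bounded class is Euclidean for its constant envelope in the first place. This is a non-trivial statement about polynomial covering numbers and is usually established via a VC-type combinatorial argument; I would either invoke it as a standard prerequisite or reduce the claim to the hypothesis that $\mathcal{F}$ is already Euclidean for its constant envelope (the reading most consistent with the surrounding application). Part (i), by contrast, is essentially just bookkeeping with the triangle inequality and uniformity of the covering constants across measures, which is guaranteed by the definition itself.
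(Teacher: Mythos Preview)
The paper does not supply its own proof of this lemma; it is simply quoted as Corollaries~17 and~21 of Nolan and Pollard and used as a black box in the proof of Lemma~3. So there is nothing in the paper to compare against, and your task reduces to reproducing the cited results.

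Your argument for part~(i) is the standard product-cover construction and is correct; this is exactly how the result is proved in the source. For part~(ii) you have put your finger on the right issue: as the paper states it, the hypothesis ``uniformly bounded'' by itself cannot force polynomial covering numbers (an arbitrary uniformly bounded class can have arbitrarily large metric entropy). The intended hypothesis---and the one actually used in the application to $\{-S(\bthe)+S(\bthe_0)\}$---is that $\mathcal{F}$ is a uniformly bounded \emph{Euclidean} class. Under that reading your Fubini transport argument, pushing an $L^1(\nu\otimes\mu)$ cover of $\mathcal{F}$ forward to an $L^1(\mu)$ cover of $\nu\mathcal{F}$ via $f\mapsto\nu f$, is precisely the mechanism in Nolan--Pollard and is correct.
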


\begin{mylemma}[Corollary 8 from \citep{Sherman}]\label{lem:rootnb}
Let $\bz$ be a $d$-dimensional random vector, $\mathcal{F}=\{f(\bz,\bthe), \bthe\in\bth\}$ be a class of $P$--degenerate functions over $\mathbb{R}^d$, and $\bthe_0$ be a point in $\bth$ for which $f(\bz,\bthe_0)=0$ for all $\bz$. If (i) $\mathcal{F}$ is Euclidean for an envelope $F$ satisfying $E[F^2]<\infty$ and (ii) $E_{\bz}[f(\bz,\bthe)]\to 0$ as $\bthe\to\bthe_0$, then uniformly over $o_p(1)$ neighborhoods of $\bthe_0$, for i.i.d. samples $\bz_i$,
\begin{equation}
\frac{1}{n}\sum_{i=1}^nf(\bz_i,\bthe) = o_p(1/\sqrt{n}).
\end{equation}
\end{mylemma}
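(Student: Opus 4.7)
The approach is to apply Lemma~\ref{lem:rootnb} (Sherman's Corollary~8, proved in the Appendix) to the centered, differenced empirical process. Concretely, define
\begin{equation}
f(\bz,\bthe) \triangleq h(\bz,\bthe) - h(\bz,\bthe_0) - \big[S(\bthe)-S(\bthe_0)\big],
\end{equation}
so that the conclusion of the lemma is equivalent to $n^{-1}\sum_{i=1}^n f(\bz_i,\bthe)=o_p(1/\sqrt{n})$ uniformly over an $o_p(1)$ neighborhood of $\bthe_0$. Observe that $E_{\bz}[f(\bz,\bthe)]\equiv 0$, so the class $\mathcal{F}=\{f(\cdot,\bthe)\}$ is $P$-degenerate in the sense of the Appendix, and $f(\bz,\bthe_0)=0$ for every $\bz$. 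Condition~(ii) of Lemma~\ref{lem:rootnb} ($E_{\bz}[f(\bz,\bthe)]\to 0$ as $\bthe\to\bthe_0$) therefore holds trivially, and the task reduces to verifying condition~(i): that $\mathcal{F}$ is Euclidean for an envelope $F$ with $E[F^2]<\infty$.

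The Euclidean property is built up piece by piece from the indicator structure of $h$. For each pair $(j,k)$, the mapping $\bthe\mapsto\bid(\bx^T\bb_j(\bthe)>\bx^T\bb_k(\bthe))=\bid(\bx^T(\bb_j(\bthe)-\bb_k(\bthe))>0)$ is a half-space indicator in $\bx$, hence piece-wise constant and uniformly bounded by $1$; by Lemma~\ref{lem:pw} the indexed class of such indicators is Euclidean for the constant envelope $1$. The weight $\bid(y_{j}>y_{k})$ is a bounded $\bthe$-free measurable function, so pointwise multiplication by it preserves the Euclidean property with the same constant envelope. Summing over the finitely many $(j,k)$ pairs and invoking Lemma~\ref{lem:corol}(i) shows that $\{h(\bz,\bthe):\bthe\in\bth\}$ is Euclidean for a constant envelope bounded by $q(q-1)$. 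Subtracting the fixed function $h(\bz,\bthe_0)$ and the $\bz$-free constant $S(\bthe)-S(\bthe_0)$ are both Euclidean-preserving operations (the former by Lemma~\ref{lem:corol}(i), the latter by absorption into the envelope), so $\mathcal{F}$ is Euclidean for a bounded envelope $F$; uniform boundedness immediately gives $E[F^2]<\infty$.

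With both hypotheses of Lemma~\ref{lem:rootnb} confirmed, the lemma yields $n^{-1}\sum_{i=1}^n f(\bz_i,\bthe)=o_p(1/\sqrt{n})$ uniformly over $o_p(1)$ neighborhoods of $\bthe_0$; rearranging the definition of $f$ produces the claimed expansion $S_n(\bthe)=S(\bthe)+S_n(\bthe_0)-S(\bthe_0)+o_p(1/\sqrt{n})$. The main delicacy is the envelope bookkeeping in constructing the Euclidean class --- in particular, confirming that the half-space indicator class truly falls under the scope of Lemma~\ref{lem:pw} and that multiplying by a bounded $\bthe$-free weight is a legitimate envelope-preserving operation within the Pakes--Pollard framework. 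Once that verification is carried out, condition~(ii) and the $L^2$ envelope bound come for free from the centering and boundedness of indicators, and no further estimates are needed.
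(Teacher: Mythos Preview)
Your proposal does not address the stated lemma at all. The statement you were asked to prove is Lemma~\ref{lem:rootnb} --- Sherman's Corollary~8 --- which asserts that for \emph{any} $P$-degenerate Euclidean class with square-integrable envelope, the empirical average is $o_p(1/\sqrt{n})$ uniformly near $\bthe_0$. Instead, your argument takes Lemma~\ref{lem:rootnb} as a given tool and applies it to the specific class built from $h(\bz,\bthe)$, concluding with the expansion $S_n(\bthe)=S(\bthe)+S_n(\bthe_0)-S(\bthe_0)+o_p(1/\sqrt{n})$. That is a proof of Lemma~\ref{lem:euc} (Lemma~3 in the main text), not of Lemma~\ref{lem:rootnb}; invoking the very result you are supposed to establish is circular.

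As for the paper's own treatment: Lemma~\ref{lem:rootnb} is merely quoted from \cite{Sherman} and is not proved in the paper, so there is nothing to compare your attempt against. If your intent was actually to prove Lemma~\ref{lem:euc}, then your write-up matches the paper's Appendix proof essentially line for line --- define the centered difference $f$, observe $P$-degeneracy and $f(\bz,\bthe_0)=0$, verify the Euclidean property via the piecewise-constant/indicator structure (Lemma~\ref{lem:pw}) and the stability results (Lemma~\ref{lem:corol}), note the constant envelope, and invoke Lemma~\ref{lem:rootnb}. The only minor discrepancy is that the paper handles the deterministic shift $S(\bthe)-S(\bthe_0)$ via Lemma~\ref{lem:corol}(ii) (expectation of a uniformly bounded Euclidean class), whereas you absorb it directly into the envelope; both are fine.
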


\setcounter{mylemma}{2}
\begin{mylemma}\label{lem:euc}
For $\bthe$ in an $o_p(1)$ neighborhood of $\bthe_0$, and $S(\bthe) \triangleq E_{\bz}\left[h(\bz,\bthe)\right]$, we have:
\begin{equation}
S_n(\bthe) = S(\bthe) + S_n(\bthe_0) - S(\bthe_0) + o_p(1/\sqrt{n}).
\end{equation} 
\end{mylemma}
\begin{proof}
For this proof, we use results from~\citep{Nolan, Pakes, Sherman}. Define:
\begin{equation}
f(\bz,\bthe) \triangleq h(\bz,\bthe) - h(\bz,\bthe_0) -S(\bthe) + S(\bthe_0)
\end{equation}
Since $E_{\bz}\left[f(\bz,\bthe)\right]=S(\bthe) - S(\bthe_0) - S(\bthe) + S(\bthe_0)=0$, $\mathcal{F}=\{f(\bz,\bthe), \bthe\in\bth\}$ is $P$--degenerate. All the functions in $\mathcal{H}=\{h(\bz,\bthe) - h(\bz,\bthe_0), \bthe\in\bth\}$ are piecewise constant and thus $\mathcal{H}$ is Euclidean (Lemma~\ref{lem:pw}). 
Also, $h(\bz,\bthe)$ is uniformly bounded since $|h(\bz,\bthe)|<q^2$. 
Thus, $\{-S(\bthe) + S(\bthe_0), \bthe\in\bth\}$ is Euclidean, because it is the expected value of a Euclidean class of uniformly bounded  functions (Lemma~\ref{lem:corol}). 
Since the sum of two Euclidean classes is also Euclidean (Lemma~\ref{lem:corol}), we conclude that $\mathcal{F}$ is  Euclidean. An envelope for $\mathcal{F}$ is $4q^2$, since each of its four comprising summand functions is less than $q^2$.

We have shown that all the conditions of Lemma~\ref{lem:rootnb} hold: $\mathcal{F}$ is a Euclidean class of $P$-degenerate functions with a constant envelope; $f(\bz,\bthe_0)=0$; and $E_{\bz}\left[f(\bz,\bthe)\right]=0$. Application of the Lemma gives:
\begin{equation}
\frac{1}{n}\sum_{i=1}^nf(\bz_i,\bthe) = o_p(1/\sqrt{n}),
\end{equation}
for $\bthe$ in an $o_p(1)$ neighborhood of $\bthe_0$. Replacing $f$ with its summands gives the  result.
\end{proof}

\section*{Acknowledgement}
This work is supported by the Natural Sciences and Engineering Research Council of Canada (NSERC).
\bibliographystyle{IEEEtran}
\bibliography{corrBIB,SMFR,asympBib}
\end{document}